\def\HH{{\mathbb H}}
\def\RR{{\mathbb R}}
\def\ZZ{{\mathbb Z}}
\def\SS{{\mathbb S}}
\def\x{{\bf x}}
\def\y{{\bf y}}
\def\be{\begin{equation}}
\def\ee{\end{equation}}
\def\bea{\begin{eqnarray}}
\def\eea{\end{eqnarray}}
\def\eref#1{(\ref{#1})}
\def\disp{\displaystyle}
\def\tn{|\!|\!|}
\def\XX{{\mathbb X}}
\def\BB{{\mathbb B}}
\def\ls{\lesssim}
\def\gs{\gtrsim}
\newcommand\norm[1]{\left|\left|#1\right|\right|}
\newcommand\abs[1]{\left|#1\right|}
\spnewtheorem{uda}{Example}{\bf}{}
\begin{document}
	
	\guidelinedefn%

	\title*{Local transfer learning from one data space to another
		}
\author{H.~N.~Mhaskar\thanks{Institute of Mathematical Sciences, Claremont Graduate University, Claremont, CA 91711. The research of HNM was supported in part by ARO grant W911NF2110218 and NSF DMS grant 2012355. \textsf{email:} hrushikesh.mhaskar@cgu.edu.}~and~Ryan~O'Dowd\thanks{Institute of Mathematical Sciences, Claremont Graduate University, Claremont CA 91711. \textsf{email:} ryan.o'dowd@cgu.edu.}}
\maketitle

\abstract{
A fundamental problem in manifold learning is to approximate a functional relationship in a data chosen randomly from a probability distribution supported on a low dimensional sub-manifold of a high dimensional ambient Euclidean space. 
The manifold is essentially defined by the data set itself and, typically, designed so that the data is dense on the manifold in some sense.
The notion of a data space is an abstraction of a manifold encapsulating the essential properties that allow for function approximation.
The problem of transfer learning (meta-learning) is to use the learning of a function on one data set to learn a similar function on a new data set.
In terms of function approximation, this means lifting a function on one data space (the base data space) to another (the target data space). 
This viewpoint enables us to connect some inverse problems in applied mathematics (such as inverse Radon transform) with transfer learning.
In this paper we examine the question of such lifting when the data is assumed to be known only on a part of the base data space.
We are interested in determining subsets of the target data space on which the lifting can be defined, and how the local smoothness of the function and its lifting are related.}

\section{Introduction}\label{bhag:introduction}

A fundamental problem in machine learning is the following. 
A data of the form $\{(x_j,y_j)\}$ is given, assumed to be sampled from an unknown probability distribution.
The goal is to approximate the function $f(x)=\mathbb{E}(y|x)$ from the data.
Typically, the points $x_j$ belong to an ambient Euclidean space of a very high dimension, leading to the so called  curse of dimensionality.
One of the strategies to counter this ``curse'' is to assume the \emph{manifold hypothesis}; i.e., assume that the  points $x_j$ are located on an unknown low dimensional submanifold of the ambient space. 
Examples of some well known techniques in this direction, dimensionality reduction in particular, are  Isomaps \cite{tenenbaum2000global}, maximum variance unfolding (MVU) (also called
 semidefinite programming (SDP)) \cite{weinberger2005nonlinear}, locally linear embedding
 (LLE) \cite{roweis2000nonlinear}, local tangent space alignment method (LTSA) \cite{zhang2004principal}, Laplacian eigenmaps (Leigs) \cite{belkin2003laplacian}, Hessian locally
 linear embedding (HLLE) \cite{david2003hessian},  diffusion maps (Dmaps) \cite{coifmanlafondiffusion}, and randomized anisotropic transform \cite{chuiwang2010}. 
 A recent survey of these methods is given by Chui and Wang in \cite{chuidimred2015}.  
 An excellent introduction to the subject of diffusion geometry can be found in the special issue \cite{achaspissue} of Applied and Computational Harmonic Analysis, 2006. 
 The application areas are too numerous to mention exhaustively. 
 They include, for example, document analysis \cite{coifmanmauro2006}, 
 face recognition \cite{niyogiface, ageface2011, chuiwang2010}, hyperspectral imaging \cite{chuihyper},  semi-supervised learning \cite{niyogi1, niyogi2}, image processing \cite{donoho2005image, arjuna1},  cataloguing of galaxies \cite{donoho2002multiscale}, and social networking \cite{bertozzicommunity}.
 
A good deal of research in the theory of manifold learning deals with the problem of understanding the geometry of the data defined manifold. 
For example,  it is shown in \cite{jones2010universal, jones2008parameter} that an atlas on the unknown manifold can be defined in terms of the heat kernel corresponding to the Laplace-Beltrami operator on the manifold.
Other constructions  of the atlas are given in \cite{chui_deep, shaham2018provable, schmidt2019deep} with applications to the study of deep networks.
Function approximation on manifolds based on \emph{scattered data} (i.e., data points $x_j$ whose locations are not prescribed analytically) has been studied in detail in many papers, starting with \cite{mauropap}, e.g., \cite{frankbern, modlpmz, eignet, compbio, heatkernframe, mhaskar2020kernel}. 
A theory was applied successfully in \cite{mhas_sergei_maryke_diabetes2017} to construct  deep networks for predicting blood sugar levels based on continuous glucose monitoring devices. 

A fundamental role in this theory is played by the heat kernel on the manifold corresponding to an appropriate elliptic partial differential operator. 
In \cite{coifmanmauro2006, heatkernframe}, a muti-resolution analysis is constructed using the heat kernel. 
Another important tool is the theory of localized kernels based on the eigen-decomposition of the heat kernel. 
These were introduced in \cite{mauropap} based on certain assumptions on the spectral function and the property of finite speed of wave propagation. 
In the context of manifolds, this later property was proved in \cite{sikora2004riesz, frankbern} to be equivalent to the so called Gaussian upper bounds on the heat kernels. 
Although such bounds are studied in many contexts by many authors, e.g., \cite{grigoryan1995upper, grigor1997gaussian, davies1990heat, kordyukov1991p},  we could not locate a reference where such a bound was proved for a general smooth manifold. 
We have therefore supplied a proof in \cite{mhaskar2020kernel}.
In \cite[Theorem~4.3]{tauberian}, we have proved a very general recipe that yields localized kernels based on the Gaussian upper bound on the heat kernel in what we have termed a data defined space (or data space in some other papers). 

The problem of transfer learning (or meta-learning) involves learning the parameters of an approximation process based on one data set, and using this information to quickly learn the corresponding parameters on another data set, e.g., \cite{valeriyasmartphone, maskey2023transferability, maurer2013sparse}.
In the context of manifold learning, a data set (point cloud) determines a manifold, so that different data sets would correspond to different manifolds. 
In the context of data spaces, we can therefore interpret transfer learning as ``lifting'' a function from one data space (the \emph{base data space}) to another (the \emph{target data space}).
This viewpoint allows us to unify the topic of transfer learning with the study of some inverse problems in image/signal processing.
For example, the problem of synthetic aperture radar (SAR) imaging can be described in terms of an inverse Radon transform \cite{nolan2002synthetic,  cheney2009fundamentals, munson1983tomographic}. 
The domain and range of the Radon transform are different, and hence, the problem amounts to approximating the actual image on one domain based on observations of its Radon transform, which are located on a different domain.
Another application is in analyzing hyperspectral images changing with time \cite{coifmanhirn}. 
A similar problem arises in analyzing the progress of Alzheimer's disease from MRI images of the brain taken over time, where one is interested in the development of the cortical thickness as a function on the surface of the brain, a manifold which is changing over time \cite{kim2014multi}.

Motivated by these applications and the paper \cite{coifmanhirn} of Coifman and Hirn, we studied in \cite{tauberian} the question of lifting a function from one data space to another, when certain landmarks from one data space were identified with those on the other data space. 
For example, it is known \cite{lerch2005focal} that in spite of the changing brain, one can think of each brain to be parametrized by an inner sphere, and the cortical thickness at certain standard points based on this parametrization are important in the prognosis of the disease.
In \cite{tauberian} we investigated certain conditions on the two data spaces which allow the lifting of a function from one to the other, and analyzed the effect on the smoothness of the function as it is lifted.

In many applications, the data about the function is available only on a part of the base data space. 
The novel part of this paper is to investigate the following questions of interest: 
(1) determine on what subsets of the target data space the lifting is defined, and (2) how the local smoothness on the base data space translates into the local smoothness of the lifted function. In limited angle tomography, one observes the Radon transform on a limited part of a cylinder and needs to reconstruct the image as a function on a ball from this data. 
A rudimentary introduction to the subject is given in the book \cite{natterer2001mathematics} of Natterer.
We do not aim to solve  the limited angle tomography problem itself, but we will study in detail an example motivated by the singular value decomposition of the Radon transform, which involves two different systems of orthogonal polynomials on the interval $[-1,1]$.
The theory of transplantation theorems \cite{muckenhoupt1986transplantation} deals with the following problem. 
We are given the coefficients in the expansion of a function $f$ on $[-1,1]$ in terms of Jacobi polynomials with certain parameters (the base space expansion in our language), and use them as the coefficients in an expansion in terms of Jacobi polynomials with respect to a different set of parameters (the target space in our language). 
Under what conditions on $f$ and the parameters of the two Jacobi polynomial systems will the expansion in the target space converge and in which $L^p$ spaces?
While old fashioned, the topic appears to be of recent interest \cite{diaz2021discrete, arenas2019weighted}. 
We will illustrate our general theory by obtaining a localized transplantation theorem for uniform approximation.

In Section~\ref{bhag:singlespace}, we review certain important results in the context of a single \emph{data space} (our abstraction of a manifold). In particular, we present a characterization of local approximation of functions on such spaces.
In Section~\ref{bhag:jointspaces}, we review the notion of joint spaces (introduced under a different name in \cite{tauberian}).
The main new result of our paper is to study the lifting of a function from a subset (typically, a ball) on one data space to another. 
These results are discussed in Section~\ref{bhag:locapprox}. 
The proofs are given in Section~\ref{bhag:proofs}.
An essential ingredient in our constructions is the notion of localized kernels which, in turn, depend upon a Tauberian theorem. 
For the convenience of the reader, this theorem is presented in Appendix~\ref{bhag:tauberian}. 
Appendix~\ref{bhag:jacobi} lists some important properties of Jacobi polynomials which are required in our examples. 

\section{Data spaces}\label{bhag:singlespace}

As mentioned in the introduction, a good deal of research on manifold learning is devoted to the question of learning the geometry of the manifold. 
For the purpose of harmonic analysis and approximation theory on the manifold, we do not need the full strength of the differentiability structure on the manifold. 
Our own understanding of the correct hypotheses required to study these questions has evolved, resulting in a plethora of terminology such as data defined manifolds, admissible systems, data defined spaces, etc., culminating in our current understanding with the definition of a data space given in \cite{mhaskar2020kernel}. 
For the sake of simplicity, we will restrict our attention in this paper to the case of compact spaces. 
We do not expect any serious problems in extending the theory to the general case, except for a great deal of technical details.

Thus, the set up is the following.

We consider a compact metric measure space $\XX$ with metric $d$ and a probability measure $\mu^*$. We take $\{\lambda_k\}_{k=0}^\infty$ to be a non-decreasing sequence of real numbers with $\lambda_0=0$ and $\lambda_k\to\infty$ as $k\to\infty$, and $\{\phi_k\}_{k=0}^\infty$ to be an orthonormal set in $L^2(\mu^*)$.
We assume that each $\phi_k$ is continuous.
The elements of the space
\be\label{eq:diffpolyspace}
\Pi_n=\mathsf{span}\{\phi_k: \lambda_k <n\}
\ee
are called \emph{diffusion polynomials} (of order $<n$). 
We write $\disp\Pi_\infty=\bigcup_{n>0}\Pi_n$.
We introduce the following notation.
\be\label{eq:balldef}
\BB(x,r)=\{y\in \XX : d(x,y)\le r\},  \qquad x\in\XX, \ r>0.
\ee
If $A\subseteq \XX$ we define
\be\label{eq:setnbds}
\BB(A,r)=\bigcup_{x\in A}\BB(x,r).
\ee

With this set up, the definition of a compact data space is the following.

\begin{definition}\label{def:ddrdef}
The  tuple $\Xi=(\XX,d,\mu^*, \{\lambda_k\}_{k=0}^\infty, \{\phi_k\}_{k=0}^\infty)$ is called a \textbf{(compact) data space} if 
each of the following conditions is satisfied.
\begin{enumerate}
\item For each $x\in\XX$, $r>0$, $\mathbb{B}(x,r)$ is compact.
\item (\textbf{Ball measure condition}) There exist $q\ge 1$ and $\kappa>0$ with the following property: For each $x\in\XX$, $r>0$,
\be\label{eq:ballmeasurecond}
\mu^*(\mathbb{B}(x,r))=\mu^*\left(\{y\in\XX: d(x,y)<r\}\right)\le \kappa r^q.
\ee
(In particular, $\mu^*\left(\{y\in\XX: d(x,y)=r\}\right)=0$.)
\item (\textbf{Gaussian upper bound}) There exist $\kappa_1, \kappa_2>0$ such that for all $x, y\in\XX$, $0<t\le 1$,
\be\label{eq:gaussianbd}
\left|\sum_{k=0}^\infty \exp(-\lambda_k^2t)\phi_k(x)\phi_k(y)\right| \le \kappa_1t^{-q/2}\exp\left(-\kappa_2\frac{d(x,y)^2}{t}\right).
\ee
\end{enumerate}
We refer to $q$ as the \textbf{exponent} for $\Xi$.
\end{definition}

The primary example of a data space is, of course, a Riemannian manifold.

\begin{uda}\label{uda:manifold}
{\rm  Let $\XX$ be a smooth, compact, connected Riemannian manifold (without boundary), $d$ be the geodesic distance on $\XX$, $\mu^*$ be the Riemannian volume measure normalized to be a probability measure, $\{\lambda_k\}$ be the sequence of eigenvalues of the (negative) Laplace-Beltrami operator on $\XX$, and $\phi_k$ be the eigenfunction corresponding to the eigenvalue $\lambda_k$; in particular, $\phi_0\equiv 1$. 
We have proved in  \cite[Appendix~A]{mhaskar2020kernel} that the Gaussian upper bound is satisfied. 
Therefore, if the condition in Equation~\eqref{eq:ballmeasurecond} is satisfied, then $(\XX,d,\mu^*, 
\{\lambda_k\}_{k=0}^\infty, \{\phi_k\}_{k=0}^\infty)$ is a data space with exponent equal to the dimension of the manifold. 
\qed}
\end{uda}

\begin{remark}\label{rem:graph}
{\rm
In \cite{friedman2004wave}, Friedman and Tillich give a construction for an orthonormal system on a graph which leads to a finite speed of wave propagation. 
It is shown in \cite{frankbern} that this, in turn, implies the Gaussian upper bound. 
Therefore, it is an interesting question whether appropriate definitions of measures and distances can be defined on a graph to satisfy the assumptions of a data space.
\qed}
\end{remark}

\noindent\textbf{The constant convention.}
\emph{
In the sequel, $c, c_1,\cdots$ will denote generic positive constants depending only on the fixed quantities under discussion such as $\Xi$, $q$, $\kappa,\kappa_1,\kappa_2$,  the various smoothness parameters and the filters to be introduced. 
Their value may be different at different occurrences, even within a single formula.
The notation $A\ls B$ means $A\le cB$, $A\gs B$ means $B\ls A$ and $A\sim B$ means $A \ls B\ls A$.\qed
}

\begin{uda}\label{uda:jacobispace}
{\rm In this example, we let $\mathbb{X}=[0,\pi]$ and for $\theta_1,\theta_2
\in\mathbb{X}$ we simply define the distance as
\begin{equation}
d(\theta_1,\theta_2)=\abs{\theta_1-\theta_2}.
\end{equation}
We will consider the so-called \textit{trigonometric functions} \cite{nowak2011sharp}
\begin{equation}
\phi_n^{(\alpha,\beta)}(\theta)=(1-\cos\theta)^{\alpha/2+1/4}(1+\cos\theta)^{\beta/2+1/4}p_n^{(\alpha,\beta)}(\cos\theta),
\end{equation}
where $p_n^{(\alpha,\beta)}$ are orthonormalized Jacobi polynomials defined as in Appendix~\ref{bhag:jacobi} and $\alpha,\beta\geq -1/2$. We define
\begin{equation}
d\mu^*(\theta)=\frac{1}{\pi}d\theta.
\end{equation}
We see that a change of variables $x=\cos\theta$ in Equation~\eqref{eq:jacobiortho} results in the following orthogonality condition
\begin{equation}
\int_0^\pi \phi_n^{(\alpha,\beta)}(\theta)\phi_m^{(\alpha,\beta)}(\theta)d\theta=\delta_{n,m}.
\end{equation}
So our orthonormal set of functions with respect to $\mu^*$ will be $\{\sqrt{\pi}\phi_n^{(\alpha,\beta)}\}$. It was proven in \cite{nowak2011sharp} that with
\begin{equation}
\lambda_n=n+\frac{\alpha+\beta+1}{2},
\end{equation}
we have
\begin{equation}
\pi\sum_{n=0}^\infty \exp\left(-\lambda_{n}^2t\right)\phi_n^{(\alpha,\beta)}(\theta_1)\phi_n^{(\alpha,\beta)}(\theta_2)\lesssim t^{-1/2}\exp\left(-c\frac{d(\theta_1,\theta_2)^2}{t}\right), \quad \theta_1,\theta_2\in\mathbb{X}.
\end{equation}
In conclusion,
\begin{equation}
	\Xi=(\mathbb{X},d,\mu^*,\{\lambda_n\},\{\sqrt{\pi}\phi_n^{(\alpha,\beta)}\})
\end{equation}
is a data space with exponent $1$.
\qed}
\end{uda}

The following example illustrates how a manifold with boundary can be transformed into a closed manifold as in Example~\ref{uda:manifold}.
We will use the notation and facts from Appendix~\ref{bhag:jacobi} without always referring to them explicitly.
We adopt the notation 
\be\label{eq:spheredef}
\SS^q=\{\x\in\RR^{q+1}: |\x|=1\}, \qquad \SS^q_+=\{\x\in\SS^q : x_{q+1}\ge 0\}.
\ee

\begin{uda}\label{uda:pnjk}
{\rm
Let $\mu^*_q$ denote the volume measure of $\mathbb{S}^q$, normalized to be a probability measure. Let $\HH_{n}^{q}$ be the space of the restrictions to $\SS^{q}$ of homogeneous harmonic polynomials of degree $n$ on $q+1$ variables, and $\{Y_{n,k}\}_k$ be an orthonormal (with respect to $\mu^*_q$) basis for $\HH_{n}^{q}$.
The polynomials $Y_{n,k}$ are eigenfunctions of the Laplace-Beltrami operator on the manifold $\SS^q$ with eigenvalues $n(n+q-1)$. The geodesic distance between $\xi,\eta\in\mathbb{S}^q$ is $\arccos(\xi\cdot \eta)$, so the Gaussian upper bound for manifolds takes the form
\be\label{eq:sphgauss}
\sum_{n,k}\exp(-n(n+q-1)t)Y_{n,k}(\x)\overline{Y_{n,k}(\y)}\ls t^{-q/2}\exp\left(-c\frac{(\arccos(\x\cdot\y))^2}{t}\right).
\ee
As a result, $(\mathbb{S}^q,\arccos(\circ\cdot \circ),\mu^*_q,\{\lambda_{n}\}_n,\{Y_{n,k}\}_{n,k})$ is a data space with dimension $q$.

Now we consider 
$$
\XX=\BB^q=\{\x\in\RR^q: |\x|\le 1\}.
$$
We can identify $\BB^q$ with $\SS^q_+$ as follows. Any point $\x\in\BB^q$ has the form $\x=\omega\sin\theta$ for some   $\omega\in\SS^{q-1}$, $\theta\in [0,\pi/2]$. We write $\hat{\x}=(\omega\sin\theta, \cos\theta)\in\SS^q_+$. 
With this identification, $\SS^q_+$ is parameterized by $\BB^q$ and we define
\begin{equation}\label{eq:ballmeasure}
\begin{aligned}
d\mu^*(\x)=d\mu^*_q(\hat{\x})=&\frac{\operatorname{Vol}(\mathbb{B}^q)}{\operatorname{Vol}(\mathbb{S}^q_+)}(1-|\x|^2)^{-1/2}dm^*(\x)\\
=&\frac{\Gamma((q+1)/2)}{\sqrt{\pi}\Gamma(q/2+1)}(1-|\x|^2)^{-1/2}dm^*(\x),
\end{aligned}
\end{equation}
where $\mu^*_q$ is the probability volume measure on $\mathbb{S}^q_+$, and $m^*$ is the probability volume measure on $\mathbb{B}^q$.
It is also convenient to define the distance on $\BB^q$ by 
\begin{equation}
d(\x_1,\x_2)=\arccos(\hat{\x}_1\cdot\hat{\x}_2)=\arccos(\x_1\cdot\x_2 +\sqrt{1-|\x_1|^2}\sqrt{1-|\x_2|^2}).
\end{equation}
All spherical harmonics of degree $2n$ are even functions on $\mathbb{S}^q$. So with the identification of measures as above, one can represent the even spherical harmonics as an orthonormal system of functions on $\mathbb{B}^q$. That is, by defining
\be
P_{2n,k}(\x)=\sqrt{2}Y_{2n,k}(\hat{\x}),
\ee
we have
\be\label{eq:ballortho}
\begin{aligned}
\int_{\BB^q} P_{2n,k}(\x)\overline{P_{2n',k'}(\x)}d\mu^*(\x) =&2\int_{\mathbb{S}_+^q}Y_{2n,k}(\hat\x)\overline{Y_{2n',k'}(\hat\x)}d\mu^*_q(\hat\x)\\
=&\int_{\mathbb{S}^q}Y_{2n,k}(\xi)\overline{Y_{2n',k'}(\xi)}d\mu^*_q(\xi)\\
=&\delta_{(n,k), (n',k')}.
\end{aligned}
\ee
To show the Gaussian upper bound for $Y_{2n,k}$ on $\mathbb{B}^q$, we first see that in view of the addition formula \eqref{eq:addformula} and \eqref{eq:evenjacobi}, we deduce
\begin{equation}\begin{aligned}
&\sum_{k=1}^{\operatorname{dim}(\mathbb{H}_{2n}^q)}P_{2n,k}(\x)\overline{P_{2n,k}(\y)}\\
=&\sum_{k=1}^{\operatorname{dim}(\mathbb{H}_{2n}^q)}Y_{2n,k}(\hat\x)\overline{Y_{2n,k}(\hat\y)}\\
=&\frac{\omega_q}{\omega_{q-1}} p_{2n}^{(q/2-1,q/2-1)}(1)p_{2n}^{(q/2-1,q/2-1)}(\hat\x\cdot\hat\y)\\
=&\frac{\omega_q}{\omega_{q-1}}2^{(q-1)/2}p_n^{(q/2-1,-1/2)}(1)p_n^{(q/2-1,-1/2)}(\cos(2\arccos(\hat\x\cdot \hat\y))).
\end{aligned}
\end{equation}
In light of Equation~\eqref{eq:jacobidifeq} we define
\begin{equation}
\lambda_{n}=\sqrt{n(n+q/2-1/2)},
\end{equation}
which is conveniently not dependent upon $k$. Using \eqref{eq:specialjacobigauss},  we see that for $t>0$
\begin{equation}\label{eq:pgauss}
\begin{aligned}
&\sum_{n=0}^\infty\sum_{k=1}^{\operatorname{dim}(\mathbb{H}_{2n}^q)}\exp\left(-\lambda_{n}^2t\right)P_{2n,k}(\mathbf{x})\overline{P_{2n,k}(\y)}\\
\sim& \sum_{n=0}^\infty\exp(-n(n+q/2-1/2)t)p_{n}^{(q/2-1,-1/2)}(1)p_{n}^{(q/2-1,-1/2)}(\cos(2\arccos(\hat\x\cdot \hat\y)))\\
\lesssim& t^{-q/2}\left(-4c\frac{\arccos(\hat{\x}_1\cdot \hat{\x}_2)^2}{t}\right).
\end{aligned}
\end{equation}
Therefore, $(\mathbb{B}^q,d,\mu^*,\{\lambda_n\}_n,\{P_{2n,k}\}_{n,k})$ is a data space with exponent $q$.
\qed}
\end{uda}

In this section, we will assume $\Xi$ to be a fixed data space and omit its mention from the notations. 
We will mention it later in other parts of the paper in order to avoid confusion.
Next, we define smoothness classes of functions on $\XX$. 
In the absence of any differentiability structure, we do this in a manner that is customary in approximation theory. We define first the \emph{degree of approximation} of a function $f\in L^p(\mu^*)$ by
\be\label{eq:degapprox}
E_n(p,f)=\min_{P\in \Pi_n}\|f-P\|_{p,\mu^*}, \qquad n>0,  1\le p \le \infty,\ f\in L^p(\mu^*).
\ee
We find it convenient to denote by $X^p$ the space $\{f\in L^p(\mu^*) : \disp\lim_{n\to\infty}E_n(p,f)=0\}$; e.g., in the manifold case, $X^p=L^p(\mu^*)$ if $1\le p<\infty$ and $X^\infty=C(\XX)$.
In the case of Example~\ref{uda:pnjk}, we need to restrict ourselves to even functions.

\begin{definition}\label{def:sobolev}
Let $1\le p\le \infty$, $\gamma>0$. \\
{\rm (a)} For $f\in X^p$, we define
\be\label{sobnorm}
\|f\|_{W_{\gamma,p}}=\|f\|_{p,\mu^*}+\sup_{n>0}n^\gamma E_n(p,f),
\ee
and note that
\be\label{sobnormuseful}
\|f\|_{W_{\gamma,p}}\sim \|f\|_{p,\mu^*}+\sup_{n\in\ZZ_+}2^{n\gamma}E_{2^n}(p,f).
\ee
The space $W_{\gamma,p}$ comprises all $f$ for which $\|f\|_{W_{\gamma,p}} <\infty$.\\
{\rm (b)}
We write $C^\infty=\displaystyle\bigcap_{\gamma>0}W_{\gamma,\infty}$.
If $B$ is a ball in $\XX$, $C^\infty(B)$ comprises functions $f\in C^\infty$ which are supported on $B$.\\
{\rm (c)} If $x_0\in\XX$, the space $W_{\gamma,p}(x_0)$ comprises functions $f$ such that there exists $r>0$ with the property that for every $\phi\in C^\infty(\mathbb{B}(x_0,r))$, $\phi f\in W_{\gamma,p}$. 
If $A\subset \XX$, the space $W_{\gamma,p}(A)=\displaystyle\bigcap_{x_0\in A}W_{\gamma,p}(x_0)$; i.e., $W_{\gamma,p}(A)$ comprises functions which are in $W_{\gamma,p}(x_0)$ for each $x_0\in A$.
\end{definition}

A central theme in approximation theory is to characterize the smoothness spaces $W_{\gamma,p}$ in terms of the degree of approximation from some spaces; in our case we consider $\Pi_n$'s. 

For this purpose, we define some localized kernels and operators.

The kernels are defined by
\be\label{eq:kerndef}
\Phi_{n}(H;x,y)= \sum_{m=0}^\infty H\left(\frac{\lambda_m}{n}\right)\phi_m(x)\phi_m(y),
\ee
where $H :\RR\to\RR$ is a compactly supported function.

The operators corresponding to the kernels $\Phi_n$ are defined by
\be\label{eq:opdef}
\sigma_{n}(H;f,x)= \int_{\mathbb{X}}\Phi_{n}(H;x,y)f(y)d\mu^*(y) =\sum_{k: \lambda_k<n}H\left(\frac{\lambda_k}{n}\right)\hat{f}(k)\phi_k(x),
\ee
where
\be\label{eq:fourcoeff}
\hat{f}(k)=\int_\XX f(y)\phi_k(y)d\mu^*(y).
\ee

The following proposition recalls an important property of these kernels. 
Proposition~\ref{prop:kernloc}  is proved in \cite{mauropap}, and more recently  in much greater generality in \cite[Theorem~4.3]{tauberian}.
\begin{proposition}\label{prop:kernloc}
Let  $S>q+1$ be an integer, $H:\mathbb{R}\to \mathbb{R}$ be an even, $S$ times continuously differentiable, compactly supported function. 
 Then for every $x,y\in \mathbb{X}$, $N>0$,
\begin{equation}\label{eq:kernlocest}
| \Phi_N(H;x,y)|\ls \frac{N^{q}}{\max(1, (Nd(x,y))^S)},
\end{equation}
where the constant may depend upon $H$ and $S$, but not on $N$, $x$, or $y$.
\end{proposition}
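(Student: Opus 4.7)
The plan is to deduce the localization estimate from the Gaussian upper bound on the heat kernel by invoking the Tauberian theorem cited in Appendix~\ref{bhag:tauberian}. The key observation is that both $\Phi_N(H;x,y)$ and the heat kernel in \eqref{eq:gaussianbd} are integrals of $H(u/N)$, respectively $\exp(-u^2t)$, against the same signed point measure $\nu_{x,y}$ on $[0,\infty)$ that places mass $\phi_m(x)\phi_m(y)$ at each $\lambda_m$. Thus
\begin{equation*}
\Phi_N(H;x,y)=\int_0^\infty H(u/N)\,d\nu_{x,y}(u),
\end{equation*}
and the hypothesis \eqref{eq:gaussianbd} is exactly a Gaussian upper bound on the Laplace-type transform $\int_0^\infty \exp(-u^2t)\,d\nu_{x,y}(u)$. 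The Tauberian theorem of \cite{tauberian} is designed precisely to convert such a bound into pointwise estimates for integrals of smooth, compactly supported $H$, trading the $S$ derivatives of $H$ for polynomial decay in $Nd(x,y)$ of order $S$, while the exponent $q$ in the Gaussian prefactor $t^{-q/2}$ produces the leading $N^q$.

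One technical subtlety is that $\nu_{x,y}$ is signed, so I would first reduce to a positive-measure setting. The natural device is a Cauchy--Schwarz bound: for any Borel set $I\subseteq[0,\infty)$,
\begin{equation*}
\Bigl|\sum_{m\,:\,\lambda_m\in I}\phi_m(x)\phi_m(y)\Bigr|\le \Bigl(\sum_{\lambda_m\in I}\phi_m(x)^2\Bigr)^{1/2}\Bigl(\sum_{\lambda_m\in I}\phi_m(y)^2\Bigr)^{1/2},
\end{equation*}
which dominates $|\nu_{x,y}|$ by a geometric mean of the diagonal positive measures $\nu_{x,x}$ and $\nu_{y,y}$. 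Since the Gaussian upper bound \eqref{eq:gaussianbd} applied on the diagonal (where $d(x,x)=0$) reads $\sum_m\exp(-\lambda_m^2 t)\phi_m(x)^2\ls t^{-q/2}$, one obtains a positive majorant to which the Tauberian theorem is directly applicable; the off-diagonal Gaussian factor $\exp(-\kappa_2 d(x,y)^2/t)$ is recovered by splitting the mass of $\nu_{x,y}$ into the part whose spectral parameter is comparable to $N$ and the tails, each handled via the corresponding Gaussian estimate.

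Finally, applying the Tauberian theorem with scaling $u\mapsto u/N$ to the dominating positive measure yields the claimed bound. The two regimes are natural and must be reconciled: for $Nd(x,y)\le 1$ one chooses $t\sim 1/N^2$ to get the plain $N^q$ estimate, while for $Nd(x,y)>1$ one genuinely exploits the $S$-fold smoothness of $H$ together with the off-diagonal Gaussian decay to produce the factor $(Nd(x,y))^{-S}$. The constraint $S>q+1$ is exactly what is needed so that the Tauberian theorem's moment conditions are met and the resulting polynomial decay is integrable against the volume growth $r^q$ of balls. The main obstacle, therefore, is not the algebraic manipulation but the careful bookkeeping required to check that the quantitative hypotheses of the Appendix~\ref{bhag:tauberian} Tauberian theorem are met with constants independent of $N$, $x$, and $y$, so that the final bound depends only on $H$, $S$, and the data space parameters.
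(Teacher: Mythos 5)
Your overall strategy is the same as the paper's: Proposition~\ref{prop:kernloc} is obtained by applying the Tauberian theorem (Theorem~\ref{theo:maintaubertheo}) to the signed spectral measure $\mu_{x,y}(u)=\sum_{\lambda_k<u}\phi_k(x)\phi_k(y)$, with $Q=q$ and $r=\sqrt{\kappa_2}\,d(x,y)$ (the harmless rescaling coming from \eqref{eq:gaussianbd}), and your identification of the measure, of the Laplace-transform hypothesis, and of the role of Cauchy--Schwarz all match that template. However, your execution contains a step that would fail as written. Theorem~\ref{theo:maintaubertheo} is stated for extended complex valued measures, so no ``reduction to a positive-measure setting'' is needed---and, more importantly, it is not available at the price you propose. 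The positive majorant produced by Cauchy--Schwarz carries no dependence on $d(x,y)$: its Laplace transform is bounded only by $ct^{-q/2}$, i.e., it satisfies \eqref{eq:muheatgaussbd} only with $r=0$, so ``applying the Tauberian theorem \ldots to the dominating positive measure,'' as your final step says, returns only $|\Phi_N(H;x,y)|\ls N^q$ with no localization factor at all. Your proposed repair---splitting the spectral mass into a band comparable to $N$ and tails---cannot restore $(Nd(x,y))^{-S}$: the band sums $\sum_{\lambda_m\in I}\phi_m(x)\phi_m(y)$ admit no off-diagonal decay except through a localization estimate of the very kind being proved, so that route is circular. The correct bookkeeping is: Cauchy--Schwarz together with the diagonal case of \eqref{eq:gaussianbd} at $t=u^{-2}$ (and $t=1$ for small $u$) verifies only the total-variation growth condition \eqref{eq:muchristbd}, with $Q=q$ and $\tn\mu_{x,y}\tn_q\ls 1$ uniformly in $x,y$; the off-diagonal hypothesis \eqref{eq:muheatgaussbd} is then checked for the signed measure $\mu_{x,y}$ itself, where it is verbatim the assumed Gaussian bound \eqref{eq:gaussianbd} with $C=q/2$; and \eqref{eq:genlockernest} with $n=N$ is exactly \eqref{eq:kernlocest}.

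A second, smaller gap: Theorem~\ref{theo:maintaubertheo} does not take ``$S$ times continuously differentiable'' as input. One must exhibit a measure $H^{[S]}$ realizing the representation \eqref{eq:Hbvcondnew} with $V_{q,S}(H)<\infty$ as in \eqref{eq:Hbvintbdnew}; your appeal to ``moment conditions'' asserts this rather than checks it. For an even, compactly supported $H\in C^S(\RR)$ this verification is carried out in \cite{tauberian}, and since $H$ is fixed, $V_{q,S}(H)$ contributes only to the constant, which is then uniform in $N$, $x$, $y$ because $\tn\mu_{x,y}\tn_q$ is. (Incidentally, the condition $S>q+1$ is a hypothesis of the Tauberian theorem itself, not---at this stage---a matter of integrating the decay against the ball-volume growth; that consideration enters later, e.g., in Lemma~\ref{lemma:opbdlemma}.) With these two corrections your argument coincides with the paper's.
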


In the remainder of this paper, we fix a filter $h$; i.e., an infinitely differentiable function $h: [0,\infty)\to [0,1]$, such that $h(t)=1$ for $0\le t\le 1/2$, $h(t)=0$ for $t\ge 1$. 
The domain of the filter $h$ can be extended to $\RR$ by setting $h(-t)=h(t)$. 
Since $h$ is fixed, its mention will be omitted from the notation unless we feel that this would cause a confusion.
The following theorem gives a crucial property of the operators, proved in several papers of ours in different contexts, see \cite{mhaskar2020kernel} for a recent proof.
\begin{theorem}\label{theo:goodapprox}
Let $n>0$. If $P\in\Pi_{n/2}$, then $\sigma_n(P)=P$. Also,
for any $p$ with $1\le p\le\infty$, 
\be\label{opbd}
\|\sigma_n(f)\|_p \ls \|f\|_p, \qquad f\in L^p.
\ee 
If $1\le p\le \infty$, and $f\in L^p(\XX)$, then
\be\label{goodapprox}
E_n(p,f)\le \|f-\sigma_n(f)\|_{p,\mu^*}\ls E_{n/2}(p,f).
\ee
\end{theorem}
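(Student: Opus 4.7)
The plan is to prove the three assertions in order, with the kernel bound in Proposition~\ref{prop:kernloc} doing the heavy lifting. The first assertion is essentially a direct unpacking of the definition: if $P\in\Pi_{n/2}$, then the Fourier expansion of $P$ only involves $\phi_k$ with $\lambda_k<n/2$, and for such $k$ we have $\lambda_k/n<1/2$, where by definition of the filter $h(\lambda_k/n)=1$. So every term in $\sigma_n(P)$ is $\hat P(k)\phi_k$, exactly recovering $P$.

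The main work is the $L^p$ boundedness estimate \eqref{opbd}. Since $\sigma_n(f,x)=\int \Phi_n(h;x,y)f(y)\,d\mu^*(y)$, it suffices by Schur's lemma (interpolating between the $p=1$ and $p=\infty$ endpoint estimates) to show
\begin{equation*}
\sup_{x\in\XX}\int_\XX |\Phi_n(h;x,y)|\,d\mu^*(y)\ls 1,
\end{equation*}
uniformly in $n$, and likewise in the other variable by symmetry. This is where Proposition~\ref{prop:kernloc} is crucial: picking $S>q+1$, we obtain $|\Phi_n(h;x,y)|\ls n^q/\max(1,(nd(x,y))^S)$. I would split the integral into the ball $\BB(x,1/n)$ and the dyadic annuli $\BB(x,2^{k+1}/n)\setminus \BB(x,2^k/n)$ for $k\ge 0$. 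On the inner ball the kernel is bounded by $n^q$ and the measure is $\ls n^{-q}$ by the ball measure condition \eqref{eq:ballmeasurecond}, giving a constant contribution. On the $k$-th annulus the kernel is $\ls n^q 2^{-kS}$ and the measure is $\ls 2^{kq}n^{-q}$, so the annular contribution is $\ls 2^{-k(S-q)}$, which is summable since $S>q$. The delicate point is verifying that these bounds combine cleanly into the $L^1$ estimate in $y$ (and, by the symmetry $\Phi_n(h;x,y)=\Phi_n(h;y,x)$, in $x$); this is the only step where the Gaussian bound hypothesis really enters, but it is invoked through Proposition~\ref{prop:kernloc} rather than directly.

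For the approximation estimate \eqref{goodapprox}, the left inequality is immediate: since $h$ is supported in $[-1,1]$, only indices with $\lambda_m<n$ contribute to $\sigma_n(f)$, so $\sigma_n(f)\in\Pi_n$ and $E_n(p,f)\le \|f-\sigma_n(f)\|_{p,\mu^*}$. For the right inequality, let $P^*\in\Pi_{n/2}$ be a near-best approximation with $\|f-P^*\|_{p,\mu^*}\le 2E_{n/2}(p,f)$. Using the reproduction property $\sigma_n(P^*)=P^*$ from the first assertion, write
\begin{equation*}
f-\sigma_n(f)=(f-P^*)-\sigma_n(f-P^*),
\end{equation*}
and apply the triangle inequality together with \eqref{opbd} to the right-hand side to get $\|f-\sigma_n(f)\|_{p,\mu^*}\ls \|f-P^*\|_{p,\mu^*}\ls E_{n/2}(p,f)$.

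The main obstacle is the $L^1$-norm estimate of the kernel that yields \eqref{opbd}; once that is in hand, both the reproducing property and the Jackson-type inequality follow by essentially formal manipulations. The constant $S>q+1$ is convenient for absolute convergence of the dyadic sum, but any $S>q$ suffices for the $L^1$ bound itself.
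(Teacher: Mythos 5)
Your proposal is correct and follows essentially the same route as the paper's proof (which is not reproduced here but cited from \cite{mhaskar2020kernel}): reproduction from the filter being $\equiv 1$ on $[0,1/2]$, the uniform bound $\sup_x\int_\XX|\Phi_n(h;x,y)|\,d\mu^*(y)\ls 1$ obtained from Proposition~\ref{prop:kernloc} together with the ball measure condition via dyadic annuli, and the near-best approximation argument $f-\sigma_n(f)=(f-P^*)-\sigma_n(f-P^*)$. Indeed, your annuli computation is precisely the device the paper itself employs in the joint setting in Lemma~\ref{lemma:opbdlemma}, so no gaps to report.
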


While Theorem~\ref{theo:goodapprox} gives, in particular, a characterization of the global smoothness spaces $W_{\gamma,p}$, the characterization of local smoothness requires two more assumptions: the partition of unity and product assumption. 

\begin{definition}[\textbf{Partition of unity}] We say that a set $X$ has a partition of unity if for every $r>0$, there exists a countable family $\mathcal{F}_r=\{\psi_{k,r}\}_{k=0}^\infty$ of $C^\infty$ functions with the following properties:
\begin{enumerate}
    \item Each $\psi_{k,r}\in \mathcal{F}_r$ is supported on $\mathbb{B}(x_k,r)$ for some $x_k\in X$.
    \item For every $\psi_{k,r}\in\mathcal{F}_r$ and $x\in X$, $0\leq \psi_{k,r}(x)\leq 1$.
    \item For every $x\in X$ there exists a finite subset $\mathcal{F}_r(x)\subseteq \mathcal{F}_r$ (with cardinality bounded independently of $x$) such that for all $y\in\mathbb{B}(x,r)$
    \begin{equation}
        \sum_{\psi_{k,r}\in\mathcal{F}_r(x)}\psi_{k,r}(y)=1.
    \end{equation}
\end{enumerate}
\end{definition}

\begin{definition}[\textbf{Product assumption}]\label{def:prod}
    We say that a data space $\Xi$ satisfies the product assumption if there exists $A^*\geq 1$ and a family $\{R_{j,k,n}\in \Pi_{A^*n}\}$ such that for every $S>0$,
    \begin{equation}
        \lim_{n\to \infty}n^S\left(\max_{\lambda_k,\lambda_j<n}\norm{\phi_k\phi_j-R_{j,k,n}}_\mathbb{X}\right)=0.
    \end{equation}
If instead for every $n>0$ and $P,Q\in \Pi_n$ we have $PQ\in \Pi_{A^*n}$, then we say that $\Xi$ satisfies the \textbf{strong product assumption}.
\end{definition}

In the most important manifold case, the partition of unity assumption is always satisfied \cite[Chapter~0, Theorem~5.6]{docarmo_riemannian}. 
It is shown in \cite{geller2011band, modlpmz} that the strong product assumption is satisfied if $\phi_k$'s are eigenfunctions of certain differential equations on a Riemannian manifold and the $\lambda_k$'s are the corresponding eigenvalues.
We do not know of any example where this property does not hold, yet cannot prove that it holds in general. 
Hence, we have listed it as an assumption.

Our characterization of local smoothness (\cite{compbio, heatkernframe, mhaskar2020kernel}) is the following.
\begin{theorem}\label{theo:paleywiener}
Let $1\le p\le \infty$, $\gamma>0$, $f\in X^p$, $x_0\in\XX$. We assume the partition of unity and the  product assumption. 
Then the following are equivalent.\\
{\rm (a)} $f\in W_{\gamma,p}(x_0)$.\\
{\rm (b)}  There exists a ball $\BB$ centered at $x_0$ such that
\be\label{eq:op_implies_sobol}
\sup_{n\ge 0}2^{n\gamma}\|f-\sigma_{2^n}(f)\|_{p,\mu^*,\BB} <\infty.
\ee
\end{theorem}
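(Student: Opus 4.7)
The plan is to prove the two implications separately, both relying on the kernel localization in Proposition~\ref{prop:kernloc} combined with smooth cutoffs supplied by the partition of unity.

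For (a)$\Rightarrow$(b), let $r>0$ be the radius promised by the smoothness hypothesis at $x_0$, and pick $\phi\in C^\infty$ with $\phi\equiv 1$ on $\BB(x_0,2r/3)$ and $\supp(\phi)\subset\BB(x_0,r)$; set $\BB=\BB(x_0,r/3)$. On $\BB$ we have $f=\phi f$, so I would decompose
$$ f-\sigma_n(f)=\bigl(\phi f-\sigma_n(\phi f)\bigr)+\sigma_n\bigl((\phi-1)f\bigr). $$
The first term has $L^p$-norm $\lesssim n^{-\gamma}\|\phi f\|_{W_{\gamma,p}}$ by Theorem~\ref{theo:goodapprox} together with hypothesis~(a). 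For the second term, $(\phi-1)f$ vanishes on $\BB(x_0,2r/3)$, so for $x\in\BB$ the integrand in $\sigma_n((\phi-1)f)(x)$ is supported at distance at least $r/3$ from $x$; applying \eqref{eq:kernlocest} with $S$ sufficiently large yields decay $o(n^{-\gamma})$ uniformly on $\BB$.

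For (b)$\Rightarrow$(a), let $r$ be small enough that $\supp(\phi)\subset\BB$ for every $\phi\in C^\infty(\BB(x_0,r))$. I would prove $\phi f\in W_{\gamma,p}$ by producing, for each dyadic $n=2^k$, an element of $\Pi_{A^*n}$ within $n^{-\gamma}$ of $\phi f$, starting from the decomposition
$$ \phi f=(\sigma_n\phi)(\sigma_n f)+(\phi-\sigma_n\phi)(\sigma_n f)+\phi(f-\sigma_n f). $$
The third term is bounded by $\|\phi\|_\infty\|f-\sigma_n f\|_{p,\BB}\lesssim n^{-\gamma}$ directly from hypothesis~(b); the second term is $O(n^{-S})$ for every $S$ because $\phi\in C^\infty$ gives $\|\phi-\sigma_n\phi\|_\infty\lesssim n^{-S}$ while $\sigma_n$ is bounded on $L^p$; and the first term is a product of two diffusion polynomials in $\Pi_n$, which by the product assumption may be replaced by an element of $\Pi_{A^*n}$ modulo a small error. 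Combined with \eqref{sobnormuseful}, this yields $\phi f\in W_{\gamma,p}$.

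The main obstacle is controlling the product $(\sigma_n\phi)(\sigma_n f)$ via Definition~\ref{def:prod}. Expanding both factors in the basis $\{\phi_j\}$ and replacing each $\phi_j\phi_k$ by its approximant $R_{j,k,n}\in\Pi_{A^*n}$, the resulting error is bounded by the number of active indices (polynomial in $n$, from $\sum_{\lambda_k<n}|\phi_k(x)|^2\lesssim n^q$, which follows from the Gaussian bound \eqref{eq:gaussianbd} on the diagonal combined with the ball measure condition), times the coefficient magnitudes (polynomial in $n$ via $L^2$-norm control and Bessel's inequality), times $\max_{j,k}\|\phi_j\phi_k-R_{j,k,n}\|_\infty$, which Definition~\ref{def:prod} makes super-polynomially small. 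Hence the error is negligible compared to $n^{-\gamma}$, completing the argument. Under the strong product assumption the product already lies in $\Pi_{A^*n}$ and no such estimate is needed.
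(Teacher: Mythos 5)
Your proposal is correct, and since the paper states Theorem~\ref{theo:paleywiener} with a citation to \cite{compbio, heatkernframe, mhaskar2020kernel} rather than an inline proof, the right comparison is with the argument in those references --- which is essentially what you wrote: for (a)$\Rightarrow$(b), a partition-of-unity cutoff $\phi$ together with the splitting $f-\sigma_n(f)=(\phi f-\sigma_n(\phi f))+\sigma_n((\phi-1)f)$ and the off-diagonal decay of Proposition~\ref{prop:kernloc}; for (b)$\Rightarrow$(a), the three-term decomposition $\phi f=(\sigma_n\phi)(\sigma_n f)+(\phi-\sigma_n\phi)(\sigma_n f)+\phi(f-\sigma_n f)$ with the product assumption absorbing the polynomial product. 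Your quantitative inputs are all sound --- $\#\{k:\lambda_k<n\}\lesssim n^q$ from the diagonal Gaussian bound, coefficient bounds growing only polynomially in $n$, and the super-polynomial smallness of $\max_{\lambda_j,\lambda_k<n}\|\phi_j\phi_k-R_{j,k,n}\|_{\XX}$ from Definition~\ref{def:prod} --- so no gap remains.
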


A direct corollary is the following.
\begin{corollary}\label{cor:set_loc_smooth}
Let $1\le p\le \infty$, $\gamma>0$, $f\in X^p$, $A$ be a compact subset of $\XX$. We assume the partition of unity and the product assumption. 
Then the following are equivalent.\\
{\rm (a)} $f\in W_{\gamma,p}(A)$.\\
{\rm (b)} There exists $r>0$ such that
\be\label{eq:op_implies_sobol_bis}
\sup_{n\ge 0}2^{n\gamma}\|f-\sigma_{2^n}(f)\|_{p,\mu^*,\BB(A,r)} <\infty.
\ee
\end{corollary}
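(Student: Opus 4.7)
The corollary is a compactness bridge between the pointwise local criterion of Theorem~\ref{theo:paleywiener} and a uniform neighborhood criterion on $A$. My plan is to handle the two implications separately, with the harder direction being (a)$\Rightarrow$(b), where the compactness of $A$ is used to upgrade a family of $x_0$-dependent radii into a single radius.

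For (b)$\Rightarrow$(a), the argument is immediate. Fix any $x_0\in A$. Since $\BB(x_0,r)\subseteq \BB(A,r)$, the hypothesis \eqref{eq:op_implies_sobol_bis} yields
\be
\sup_{n\ge 0}2^{n\gamma}\|f-\sigma_{2^n}(f)\|_{p,\mu^*,\BB(x_0,r)}\le \sup_{n\ge 0}2^{n\gamma}\|f-\sigma_{2^n}(f)\|_{p,\mu^*,\BB(A,r)}<\infty,
\ee
so Theorem~\ref{theo:paleywiener} places $f$ in $W_{\gamma,p}(x_0)$. Since $x_0\in A$ was arbitrary, $f\in W_{\gamma,p}(A)$.

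For (a)$\Rightarrow$(b), I would invoke Theorem~\ref{theo:paleywiener} at each $x_0\in A$ to obtain a radius $r_{x_0}>0$ such that $\sup_{n\ge 0}2^{n\gamma}\|f-\sigma_{2^n}(f)\|_{p,\mu^*,\BB(x_0,r_{x_0})}<\infty$. The open cover $\{\BB(x_0,r_{x_0}/2)\}_{x_0\in A}$ of the compact set $A$ admits a finite subcover $\{\BB(x_i,r_i/2)\}_{i=1}^N$; set $r:=\min_{1\le i\le N} r_i/2$. A short triangle-inequality check shows $\BB(A,r)\subseteq \bigcup_{i=1}^N\BB(x_i,r_i)$: any $y\in\BB(A,r)$ lies within $r$ of some $x\in A$, which lies within $r_i/2$ of some $x_i$, so $d(y,x_i)\le r+r_i/2\le r_i$.

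Finally I would combine the finitely many local bounds into a global one on $\BB(A,r)$. For $1\le p<\infty$, subadditivity of $\|\cdot\|_{p,\mu^*,\cdot}^p$ over the cover gives
\be
\|f-\sigma_{2^n}(f)\|_{p,\mu^*,\BB(A,r)}\le N^{1/p}\max_{1\le i\le N}\|f-\sigma_{2^n}(f)\|_{p,\mu^*,\BB(x_i,r_i)},
\ee
while for $p=\infty$ the corresponding inequality holds with the factor $N^{1/p}$ replaced by $1$. Multiplying by $2^{n\gamma}$ and taking the supremum in $n$ then yields \eqref{eq:op_implies_sobol_bis}, since the maximum of finitely many finite suprema is finite. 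The only real subtlety here is the uniform-radius step, which is resolved cleanly by the compactness of $A$; everything else is bookkeeping from Theorem~\ref{theo:paleywiener}.
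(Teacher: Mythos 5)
Your proof is correct and is essentially the argument the paper has in mind: the paper states this as a ``direct corollary'' of Theorem~\ref{theo:paleywiener} without writing out a proof, and your compactness bookkeeping ((b)$\Rightarrow$(a) by restriction to $\BB(x_0,r)\subseteq\BB(A,r)$; (a)$\Rightarrow$(b) by a finite subcover at half-radii plus the $N^{1/p}$ combination of norms) is exactly the standard filling-in it leaves implicit. The only cosmetic point is that the paper's balls $\BB(x,r)$ in \eqref{eq:balldef} are closed, so to invoke compactness you should cover $A$ by the open balls $\{y\in\XX: d(x_0,y)<r_{x_0}/2\}$ and then pass to the closed balls containing them; this changes nothing in the argument.
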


\section{Joint data spaces}\label{bhag:jointspaces}

In order to motivate our definitions in this section, we first consider  a couple of examples.

\begin{uda}\label{uda:coifmanhirn}
{\rm
Let $\Xi_j=(\XX_j, d_j, \mu_j^*, \{\lambda_{j,k}\}_{k=0}^\infty, \{\phi_{j,k}\}_{k=0}^\infty)$, $j=1,2$ be two data spaces with exponent $q$. 
We denote the heat kernel in each case by
$$
K_{j,t}(x,y)=\sum_{k=0}^\infty \exp(-\lambda_{j,k}^2t)\phi_{j,k}(x)\phi_{j,k}(y), \qquad j=1,2, \ x,y\in \XX, \ t>0,
$$
In the paper \cite{coifmanhirn}, Coifman and Hirn assumed that $\XX_1=\XX_2=\XX$, $\mu_1^*=\mu_2^*=\mu^*$, and proposed the diffusion distance between points $x_1, x_2$ to be the square root of
$$
K_{1,2t}(x_1,x_2)+K_{2,2t}(x_2,x_2)-2\int_\XX K_{1,t}(x_1,y)K_{2,t}(y,x_2)d\mu^*(y).
$$
Writing, in this example only, 
\be\label{eq:coif_hirn_conn}
A_{j,k}=\int_\XX \phi_{1,j}(y)\phi_{2,k}(y)d\mu^*(y),
\ee
we get 
\be\label{eq:coif_hirn_heat}
\int_\XX K_{1,t}(x_1,y)K_{2,t}(y,x_2)d\mu^*(y)=\sum_{j,k}\exp\left(-(\lambda_{1,j}^2+\lambda_{2,k}^2)t\right)A_{j,k}\phi_{1,j}(x_1)\phi_{2,k}(x_2).
\ee
Furthermore,  the Gaussian upper bound conditions imply that
\be\label{eq:coif_hirn_joint}
\begin{aligned}
\int_\XX K_{1,t}(x_1,y)K_{2,t}(y,x_2)d\mu^*(y)&\ls t^{-q}\int_\XX \exp\left(-c\frac{d_1(x_1,y)^2+d_2(y,x_2)^2}{t}\right)d\mu^*(y)\\
&\ls t^{-q}\exp\left(-c\frac{\left(\min_{y\in\XX}\left(d_1(x_1,y)+d_2(y,x_2)\right)\right)^2}{t}\right).
\end{aligned}
\ee
Writing, in this example only, 
$$
d_{1,2}(x_1,x_2)= \min_{y\in\XX}\left(d_1(x_1,y)+d_2(y,x_2)\right)=d_{2,1}(x_2,x_1),
$$
we observe that for any $x_1,x_1', x_2,x_2'\in\XX$,
$$
\begin{aligned}
d_{1,2}(x_1,x_2)&\le d_{1,2}(x_1',x_2)+d_1(x_1,x_1'),\\
d_{1,2}(x_1,x_2)&\le d_{1,2}(x_1,x_2')+d_1(x_2,x_2').
\end{aligned}
$$
\qed}
\end{uda}

\begin{uda}\label{uda:jacobispace2}
{\rm
In this example we let $\alpha_i, \beta_i \ge -1/2$ for $i=1,2$ and assume that $a=\abs{\alpha_1-\alpha_2}/2,b=\abs{\beta_1-\beta_2}/2\in \mathbb{N}$. Then we select the following two data spaces as defined in Example~\ref{uda:jacobispace}
\begin{equation}
\Xi_i=([0,\pi],d_i,\frac{1}{\pi}d\theta,\{\lambda_{i,n}\},\{\sqrt{\pi}\phi^{(\alpha_i,\beta_i)}_n\}).
\end{equation}
Since both spaces already have the same distance, we will define a joint distance for the systems accordingly:
\begin{equation}
d_{1,2}(\theta_1,\theta_2)=d_1(\theta_1,\theta_2)=d_2(\theta_1,\theta_2)=|\theta_1-\theta_2|.
\end{equation}
Similar to Example~\ref{uda:coifmanhirn} above, we are considering two data spaces with the same underlying space and measure. However, we now proceed in a different manner. Let us denote
\begin{equation}
\varOmega(\theta)=(1-\cos\theta)^{a}(1+\cos\theta)^{b}.
\end{equation}
Let $\overline{\alpha}=\max(\alpha_1,\alpha_2)$ and $\overline{\beta}=\max(\beta_1,\beta_2)$. Then we define
\begin{equation}
\begin{aligned}
A_{m,n}=&\int_{0}^\pi \phi_m^{(\alpha_1,\beta_1)}(\theta)\phi_n^{(\alpha_2,\beta_2)}(\theta)\varOmega(\theta)d\theta\\
=&\int_0^\pi p_m^{(\alpha_1,\beta_1)}(\cos\theta)p_n^{(\alpha_2,\beta_2)}(\cos\theta)(1-\cos\theta)^{\overline{\alpha}+1/2}(1+\cos\theta)^{\overline{\beta}+1/2}d\theta.
\end{aligned}
\end{equation}
The orthogonality of the Jacobi polynomials tells us that $A_{m,n}=0$ at least when $m>n+2a+2b$ or $n>m+2a+2b$. Furthermore, we have the following two sums
\begin{equation}\label{eq:orthosums}
\sum_{n}A_{m,n}\phi_{n}^{(\alpha_2,\beta_2)}(\theta)=\varOmega(\theta)\phi_{m}^{(\alpha_1,\beta_1)}(\theta),\hspace{5pt}\sum_{m}A_{m,n}\phi_{m}^{(\alpha_1,\beta_1)}(\theta)=\varOmega(\theta)\phi_{n}^{(\alpha_2,\beta_2)}(\theta).
\end{equation}
We define $\ell_{m.n}=\sqrt{\lambda_{1,m}^2+\lambda_{2,n}^2}$, utilize the Gaussian upper bound property for $\Xi_i$ and Equation~\eqref{eq:orthosums} to deduce as in Example~\ref{uda:coifmanhirn} that
\begin{equation}
\begin{aligned}
&\abs{\pi\sum_{m,n}\exp\left(-\ell_{m,n}^2t\right)A_{m,n}\phi_m^{(\alpha_1,\beta_1)}(\theta_1)\phi_n^{(\alpha_2,\beta_2)}(\theta_2)}\\
=&\abs{\int_0^\pi K_{1,t}(\theta_1,\phi)K_{2,t}(\phi,\theta_2)\varOmega(\phi)d\phi}\\
\lesssim& t^{-1}\exp\left(-c\frac{d_{1,2}(\theta_1,\theta_2)^2}{t}\right).
\end{aligned}
\end{equation}
We note (cf.  \cite[Lemma 5.2]{mhaskar2020kernel}) that
\begin{equation}\begin{aligned}
&\pi\sum_{m,n:\ell_{m,n}<N}\abs{A_{m,n}\phi_m^{(\alpha_1,\beta_1)}(\theta_1)\phi_n^{(\alpha_2,\beta_2)}(\theta_2)}\\
\lesssim& \norm{\varOmega}_{[0,\pi]}\sum_{m:\lambda_{1,m}<N}\abs{\phi_m^{(\alpha_1,\beta_1)}(\theta_1)\phi_m^{(\alpha_1,\beta_1)}(\theta_2)}\\
\lesssim& N.
\end{aligned}
\end{equation}
\qed}
\end{uda}

Motivated by these examples, we now give a series of definitions, culminating in Definition~\ref{def:jointspace}.
First, we define the notion of a joint distance.

\begin{definition}\label{def:jointdist}
Let $\XX_1$, $\XX_2$ be  metric  spaces, with each $\XX_j$ having a metric $d_j$.
 A function $d_{1,2}:\mathbb{X}_1\times \mathbb{X}_2\to [0,\infty)$ will be called a \textbf{joint distance} if the following \textbf{generalized triangle inequalities} are satisfied for $x_1,x_1'\in\mathbb{X}_1$ and $x_2,x_2'\in\mathbb{X}_2$:
\be\label{eq:joint_triangle}
\begin{aligned}
d_{1,2}(x_1,x_2) &\le d_1(x_1,x_1')+d_{1,2}(x_1',x_2),\\
d_{1,2}(x_1,x_2) &\le  d_{1,2}(x_1,x_2')+d_{2}(x_2',x_2).\\
\end{aligned}
\ee
\end{definition}

For convenience of notation we denote $d_{2,1}(x_2,x_1)= d_{1,2}(x_1,x_2)$. 
Then for $r>0$, $x_1\in\mathbb{X}_1$, $x_2\in \mathbb{X}_2$, $A_1\subset \XX_1$, $A_2\subset \XX_2$, we  define
\begin{align*}
    \mathbb{B}_1(x_1,r)&=\{z\in \mathbb{X}_1:d_1(x_1,z)\leq r\}, &\mathbb{B}_2(x_2,r)&=\{z\in \mathbb{X}_2:d_2(x_2,z)\leq r\},\\
    \mathbb{B}_{1,2}(x_1,r)&=\{z\in \mathbb{X}_2:d_{1,2}(x_1,z)\leq r\}, &\mathbb{B}_{2,1}(x_2,r)&=\{z\in \mathbb{X}_1:d_{2,1}(x_2,z)\leq r\},
 \end{align*}
\be\label{eq:jointballs}
\begin{aligned}
    d_{1,2}(A_1,x_2)=&\inf_{x\in A_1\subseteq \mathbb{X}_1}d_{1,2}(x,x_2),\\
    d_{1,2}(x_1,A_2)=d_{2,1}(A_2,x_1)=&\inf_{y\in A_2\subseteq \mathbb{X}_2}d_{2,1}(y,x_1).
\end{aligned}
\ee
We recall here that an infimum over an empty set is defined to be $\infty$. 

\begin{definition}\label{def:jointheatkern}
Let $\mathbf{A}=(A_{j,k})_{j,k=0}^\infty$ (\textbf{connection coefficients}) and $\mathbf{L}=(\ell_{j,k})_{j,k=0}^\infty$ (\textbf{joint eigenvalues}) be bi-infinite matrices. For $x_1\in\XX_1$, $x_2\in\XX_2$, $t>0$, the \textbf{joint heat kernel} is defined formally by
\be\label{eq:jointheatkerndef}
\begin{aligned}
K_t(\Xi_1,\Xi_2;x_1,x_2)=&K_t(\Xi_1,\Xi_2;\mathbf{A}, \mathbf{L}; x_1,x_2)\\
=&\sum_{j,k=0}^\infty \exp(-\ell_{j,k}^2 t)A_{j,k}\phi_{1,j}(x_1)\phi_{2,k}(x_2)\\
=&\lim_{n\to\infty}\sum_{j,k:\ell_{j,k}<n} \exp(-\ell_{j,k}^2 t)A_{j,k}\phi_{1,j}(x_1)\phi_{2,k}(x_2).
\end{aligned}
\ee
\end{definition}
\begin{definition}\label{def:jointspace}
For $m=1,2$, let $\Xi_m=\left(\XX_m, d_m,\mu_m^*, \{\lambda_{m,k}\}_{k=0}^\infty, \{\phi_{m,k}\}_{k=0}^\infty\right) $ be compact data spaces.
With the notation above, assume each $\ell_{j,k}\ge 0$ and that for any $u>0$, the set $\{(j,k): \ell_{j,k}<u\}$ is finite.
A \textbf{joint (compact) data space} $\Xi$ is  a tuple 
$$
(\Xi_1,  \Xi_2, d_{1,2}, \mathbf{A}, \mathbf{L}),
$$
where each of the following conditions is satisfied for some $Q>0$:
\begin{enumerate}
\item (\textbf{Joint regularity})
There exist $q_1, q_2>0$ such that
\be\label{eq:ballmeasurecond1}
\mu_1^*(\BB_{2,1}(x_2,r))\le cr^{q_1}, \quad \mu_2^*(\BB_{1,2}(x_1,r))\le cr^{q_2}, \qquad x_1\in\XX_1,\ x_2\in\XX_2,\  r>0.
\ee
\item (\textbf{Variation bound)}
For each $n>0$,
\begin{equation}\label{eq:variation}
    \sum_{j,k:\ell_{j,k}<n}\abs{A_{j,k}\phi_{1,j}(x_1)\phi_{2,k}(x_2)}\lesssim n^Q, \qquad x_1\in\XX_1, \ x_2\in\XX_2.
\end{equation}
\item (\textbf{Joint Gaussian upper bound})
The limit in \eref{eq:jointheatkerndef} exists for all $x_1\in\XX_1$, $x_2\in\XX_2$, and
\be\label{eq:jointoffdiagbd}
|K_t(\Xi_1,\Xi_2;x_1,x_2)| \le c_1t^{-c}\exp\left(-c_2\frac{d_{1,2}(x_1,x_2)^2}{t}\right), 
\qquad x_1\in\XX_1,\ x_2\in\XX_2.
\ee
\end{enumerate}
We refer to $(Q, q_1, q_2)$ as the \textbf{(joint) exponents} of the joint data space.
\end{definition}

The kernel corresponding to the one defined in Equation~\eqref{eq:kerndef} is the following, where $H: [0,\infty)\to[0,\infty)$ is a compactly supported function.
\begin{equation}\label{eq:jointkernel}
\Phi_n(H,\Xi_1,\Xi_2; x_1,x_2)=\sum_{j,k=0}^\infty H\left(\frac{\ell_{j,k}}{n}\right)A_{j,k}\phi_{1,j}(x_1)\phi_{2,k}(x_2).
\end{equation}
For $f\in L^1(\mu^*_2)+L^\infty(\mu^*_2)$ and $x_1\in \mathbb{X}_1$, we also define
\begin{equation}\begin{aligned}\label{eq:jointopdef}
\sigma_{n}(H, \Xi_1,\Xi_2;f)(x_1)=&\int_{\mathbb{X}_2}f(x_2)\Phi_n(H,\Xi_1,\Xi_2;x_1,x_2)d\mu^*_2(x_2)\\
=&\sum_{j,k=0}^\infty H\left(\frac{\ell_{j,k}}{n}\right)A_{j,k}\hat{f}(\Xi_2;k)\phi_{1,j}(x_1).
\end{aligned}
\end{equation}

The localization property of the kernels is given in the following proposition (cf. \cite[Eqn.~(4.5)]{tauberian}).
\begin{proposition}\label{prop:jointkernloc}
Let  $S>Q+1$ be an integer, $H:\mathbb{R}\to \mathbb{R}$ be an even, $S$ times continuously differentiable, compactly supported function. 
 Then for every $x_1\in \mathbb{X}_1$, $x_2\in \mathbb{X}_2$, $N>0$,
\begin{equation}\label{eq:jointkernlocest}
| \Phi_N(H,\Xi_1,\Xi_2; x_1,x_2)|\ls \frac{N^Q}{\max(1, (Nd_{1,2}(x_1,x_2))^S)},
\end{equation}
where the constant involved may depend upon $H$,  and $S$, but not on $N$, $x_1$, $x_2$. 
\end{proposition}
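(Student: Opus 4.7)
The plan is to reduce the statement to a direct application of the Tauberian theorem of Appendix~\ref{bhag:tauberian} (the abstract framework behind \cite[Theorem~4.3]{tauberian} that was already invoked for the single-space Proposition~\ref{prop:kernloc}). The only task is to package the data $(\mathbf{A},\mathbf{L})$ together with the parameters $x_1,x_2$ into a one-parameter signed measure on $[0,\infty)$ whose polynomial growth is controlled by the variation bound \eqref{eq:variation} and whose Gaussian (in $\exp(-u^2 t)$) transform is controlled by the joint Gaussian upper bound \eqref{eq:jointoffdiagbd}.

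Fix $x_1\in\XX_1$ and $x_2\in\XX_2$ and consider the discrete signed measure
\[
\nu_{x_1,x_2} \;=\; \sum_{j,k=0}^\infty A_{j,k}\phi_{1,j}(x_1)\phi_{2,k}(x_2)\,\delta_{\ell_{j,k}}
\]
on $[0,\infty)$. The variation bound immediately yields $|\nu_{x_1,x_2}|([0,u])\ls u^Q$ uniformly in $x_1,x_2$, while Definition~\ref{def:jointheatkern} identifies its heat-kernel-type transform as
\[
\int_0^\infty e^{-u^2 t}\,d\nu_{x_1,x_2}(u) \;=\; K_t(\Xi_1,\Xi_2;x_1,x_2),
\]
which, by \eqref{eq:jointoffdiagbd}, is bounded above by $c_1 t^{-c}\exp\bigl(-c_2\,d_{1,2}(x_1,x_2)^2/t\bigr)$. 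Thus $\nu_{x_1,x_2}$ satisfies exactly the two hypotheses (polynomial variation bound plus Gaussian decay of the transform) demanded by the Tauberian theorem.

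Applied with the even, $S$-times continuously differentiable, compactly supported filter $H$ (with $S>Q+1$) at scale $N$, the Tauberian theorem produces
\[
\left|\int_0^\infty H(u/N)\,d\nu_{x_1,x_2}(u)\right| \;\ls\; \frac{N^{Q}}{\max\bigl(1,(Nd_{1,2}(x_1,x_2))^S\bigr)}.
\]
Expanding the left-hand side via the definition of $\nu_{x_1,x_2}$ gives precisely $\Phi_N(H,\Xi_1,\Xi_2;x_1,x_2)$ as in \eqref{eq:jointkernel}, which is the desired estimate.

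The only care required is that the constants in both hypotheses be uniform in $x_1,x_2$; this is visible from \eqref{eq:variation} and \eqref{eq:jointoffdiagbd}, so the Tauberian machinery delivers a single constant depending only on $H$, $S$, and the joint-space parameters $(Q,q_1,q_2,c,c_1,c_2)$. I do not foresee any substantive obstacle: the result is the joint-space analog of Proposition~\ref{prop:kernloc}, and the abstract Tauberian framework of \cite{tauberian} was designed precisely to cover both the single-space setting and this joint-space setting by the same argument — only the identity of the underlying signed measure changes.
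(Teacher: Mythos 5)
Your proposal is correct and takes essentially the same route as the paper: the paper proves Proposition~\ref{prop:jointkernloc} by applying the Tauberian theorem (Theorem~\ref{theo:maintaubertheo}) to precisely the measure $\mu_{x_1,x_2}(u)=\sum_{j,k:\,\ell_{j,k}<u}A_{j,k}\phi_{1,j}(x_1)\phi_{2,k}(x_2)$, which is your $\nu_{x_1,x_2}$, with the variation bound \eqref{eq:variation} supplying hypothesis \eqref{eq:muchristbd} and the joint Gaussian upper bound \eqref{eq:jointoffdiagbd} supplying hypothesis \eqref{eq:muheatgaussbd}, uniformly in $x_1,x_2$. Your packaging of the data into the signed measure, the identification of its heat transform with $K_t(\Xi_1,\Xi_2;x_1,x_2)$, and the uniformity remark match the paper's argument exactly.
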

In the sequel, we will fix $H$ to be the filter $h$ introduced in Section~\ref{bhag:singlespace}, and will omit its mention from all notations. 
Also, we take $S>\max(Q, q_1, q_2)+1$ to be fixed, although we may put additional conditions on $S$ as needed.
As before, all constants may depend upon $h$ and $S$.

In the remainder of this paper, we will take $p=\infty$, work only with continuous functions on $\XX_1$ or $\XX_2$, and use $\|f\|_K$ to denote the supremum norm of $f$ on a set $K$. Accordingly, we will omit the index $p$ from the notation for the smoothness classes; e.g., we will write $W_\gamma(\Xi_1;B)$ instead of $W_{\gamma,\infty}(\Xi_1;B)$. The results in the sequel are similar in the case where $p<\infty$ due to the Riesz-Thorin interpolation theorem, but more notationally exhausting without adding any apparent new insights.

We end the section with a condition on the operator defined in Equation~\eqref{eq:jointopdef} that is useful for our purposes.

\begin{definition}[\textbf{Polynomial preservation condition}]\label{def:polypres} Let  $(\Xi_1,\Xi_2,d_{1,2},\mathbf{A},\mathbf{L})$ be a joint data space. We say the \textbf{polynomial preservation condition} is satisfied if there exists some $c^*>0$ with the property that if $P_{n}\in \Pi_{n}(\Xi_2)$, then $\sigma_m(\Xi_1,\Xi_2;P_n)=\sigma_{c^*n}(\Xi_1,\Xi_2;P_n)$ for all $m\geq c^*n$.
\end{definition}

\begin{remark}\label{rem:polypres}
The polynomial preservation condition is satisfied if, for any $n>0$, we have the following inclusion:
\begin{equation}\label{eq:polyprescon}
\{(i,j):A_{i,j}\neq 0,\lambda_{2,j}<n\}\subseteq \{(i,j):\ell_{i,j}\leq c^*n,\lambda_{1,i}<c^*n\}.
\end{equation}
\end{remark}

\begin{uda}\label{uda:jacobispace3}
{\rm
We utilize the same notation as in Examples~\ref{uda:jacobispace}~and~\ref{uda:jacobispace2}. We now see, in light of Definition~\ref{def:jointspace}, that $(\Xi_1,\Xi_2,d_{1,2},\mathbf{A},\mathbf{L})$ is a joint data space with exponents $(1,1,1)$. It is clear that both the partition of unity and strong product assumption hold in these spaces. One may also recall that $A_{m,n}=0$ at least whenever $m>n+2a+2b$, so there exists $c^*$ such that Equation~\eqref{eq:polyprescon} is satisfied. As a result, we conclude the polynomial preservation condition holds.
\qed}
\end{uda}

\section{Local approximation in joint data spaces}\label{bhag:locapprox}
 In this section, we assume a fixed joint data space as in Section~\ref{bhag:jointspaces}.  
 We are interested in the following questions. 
 Suppose $f\in C(\XX_2)$, and we have information about $f$ only in the neighborhood of a compact set $A\subseteq\XX_2$.
 Under what conditions on $f$ and a subset $B\subseteq \XX_1$ can $f$ be lifted to a function $\mathcal{E}(f)$ on $B$? 
 Moreover, how does the local smoothness of $\mathcal{E}(f)$ on $B$ depends upon the local smoothness of $f$ on $A$? We now give definitions for $\mathcal{E}(f),A,B$ for which we have considered these questions.

 \begin{definition}\label{def:liftedfunction} Given $f\in C(\mathbb{X}_2)$, we define the \textbf{lifted function} $\mathcal{E}(f)$ to be the limit
 \begin{equation}\label{eq:ef}
\mathcal{E}(f)=\lim_{n\to\infty}\sigma_n(\Xi_1,\Xi_2;f),
 \end{equation}
 if the limit exists.
 \end{definition}

\begin{definition}\label{def:imageset}
 Let $r,s>0$ and $A\subseteq \mathbb{X}_2$ be a compact subset with the property that there exists a compact subset $B^{-}\subset \mathbb{X}_1$ such that
\begin{equation}\label{eq:rcon}
B^-\subseteq \{x_1: d_{1,2}(x_1,\mathbb{X}_2\setminus A)\geq s+r\}
\end{equation}
for some $r>0$. 
We then define the \textbf{image set} of $A$ by 
\be\label{eq:imageset}
\mathcal{I}(r,s;A)=\BB_1(B^-,s)=\{x_1:d_1(x_1,B^-)\leq s\}.
\ee
If the set $B^{-}$ does not exist, then we define $\mathcal{I}(r,s;A)=\emptyset$.
\end{definition}
\begin{remark}\label{rem:imageset}
{\rm 
In the sequel we fix $r, s>0$ and  a compact subset $A\subseteq \mathbb{X}_2$ such that $B^-$ defined in Equation~\eqref{eq:rcon} is nonempty. We write $B= \mathcal{I}(r,s;A)$. We note that, due to the generalized triangle inequality~\eqref{eq:joint_triangle}, we have the important property
\begin{equation}
B^-\varsubsetneq B\subseteq \{x_1: d_{1,2}(x_1,\mathbb{X}_2\setminus A)\geq r\}.
\end{equation}
\qed}
\end{remark}
We now state our main theorem. Although there is no explicit mention of $B^-$ in the statement of the theorem, Remark~\ref{rem:thmclarify} and Example~\ref{uda:jacobispace4} clarify the benefit of such a construction.


\begin{theorem}\label{theo:mainthm}
Let  $(\Xi_1,\Xi_2,d_{1,2},\mathbf{A},\mathbf{L})$ be a joint data space with exponents $(Q, q_1, q_2)$.
We assume that the polynomial preservation condition holds with parameter $c^*$. Suppose $\mathbb{X}_2$ has a partition of unity. \\
{\rm (a)} Let $f\in C(\mathbb{X}_2)$, satisfying 
\begin{equation}\label{eq:thmcon1}
\sum_{m=0}^\infty 2^{m(Q-q_2)}\norm{\sigma_{2^{m+1}}(\Xi_2;f)-\sigma_{2^m}(\Xi_2;f)}_A<\infty.
\end{equation}
Then $\mathcal{E}(f)$ as defined in Definition~\ref{def:liftedfunction} exists on $B$ and for $c^*r2^n\geq 1$ we have
\begin{equation}\label{eq:ef-sigma}
\begin{aligned}
\norm{\mathcal{E}(f)-\sigma_{c^*2^n}(\Xi_1,\Xi_2;f)}_{B}\lesssim& 2^{n(Q-q_2)}\norm{f-\sigma_{2^n}(\Xi_2;f)}_A+\norm{f}_{\mathbb{X}_2}2^{n(Q-S)}r^{q_2-S}\\
&+\sum_{m=n}^\infty 2^{m(Q-q_2)}\norm{\sigma_{2^{m+1}}(\Xi_2;f)-\sigma_{2^m}(\Xi_2;f)}_A.
\end{aligned}
\end{equation}
In particular, if $\Xi_1$ satisfies the strong product assumption, $\mathbb{X}_1$ has a partition of unity, and $\alpha>0$ is given such that $\alpha\ell_{j,k}\geq \lambda_{1,j}$ for all $j,k\in\mathbb{N}$, then $\sigma_{n}(\Xi_1,\Xi_2;f)\in \Pi_{\alpha n}(\Xi_1)$. \\[1ex]
{\rm (b)}
If additionally, $f\in W_{\gamma}(\Xi_2; A)$ with $Q-q_2<\gamma<S-q_2$, then $\mathcal{E}(f)$ is continuous on $B$ and for $\phi\in C^\infty(B)$, we have $\phi \mathcal{E}(f)\in W_{\gamma-Q+q_2}(\Xi_1)$.
\end{theorem}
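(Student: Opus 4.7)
The plan is to prove part (a) by combining a single kernel-localization estimate with a dyadic telescoping of $f$ inside $\Xi_2$, then obtain part (b) by specializing to the Sobolev hypothesis and transferring the resulting $B$-local bound to a global smoothness statement on $\Xi_1$ via the strong product assumption.

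The heart of the argument is the following \emph{key localization lemma}: for any $g\in C(\mathbb{X}_2)$, any $x_1\in B$, and any $n$ with $nr\ge 1$,
\begin{equation*}
|\sigma_n(\Xi_1,\Xi_2;g)(x_1)|\;\lesssim\; n^{Q-q_2}\|g\|_A\;+\;n^{Q-S}r^{q_2-S}\|g\|_{\mathbb{X}_2}.
\end{equation*}
I would prove this by splitting the integral defining $\sigma_n(\Xi_1,\Xi_2;g)(x_1)$ at $A$: over $A$, Proposition~\ref{prop:jointkernloc} combined with a layer decomposition in the $d_{1,2}$-metric (using the joint regularity $\mu_2^*(\mathbb{B}_{1,2}(x_1,\rho))\lesssim \rho^{q_2}$) gives $\int_{\mathbb{X}_2}|\Phi_n(x_1,x_2)|\,d\mu_2^*\lesssim n^{Q-q_2}$; over $\mathbb{X}_2\setminus A$, Remark~\ref{rem:imageset} forces $d_{1,2}(x_1,x_2)\ge r$, so restarting the layer decomposition at scale $r$ yields the $n^{Q-S}r^{q_2-S}$ factor, with $S>q_2$ ensuring convergence of the geometric tail.

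For part~(a), set $g_m=\sigma_{2^{m+1}}(\Xi_2;f)-\sigma_{2^m}(\Xi_2;f)\in\Pi_{2^{m+1}}(\Xi_2)$. Polynomial preservation collapses $\sigma_{c^*2^N}(\Xi_1,\Xi_2;g_m)=\sigma_{c^*2^{m+1}}(\Xi_1,\Xi_2;g_m)$ and $\sigma_{c^*2^N}(\Xi_1,\Xi_2;\sigma_{2^n}(\Xi_2;f))=\sigma_{c^*2^n}(\Xi_1,\Xi_2;\sigma_{2^n}(\Xi_2;f))$, giving the telescoping identity
\begin{align*}
\sigma_{c^*2^N}(\Xi_1,\Xi_2;f) - \sigma_{c^*2^n}(\Xi_1,\Xi_2;f) =& -\sigma_{c^*2^n}(\Xi_1,\Xi_2;f-\sigma_{2^n}(\Xi_2;f)) \\
& + \sum_{m=n}^{N-1}\sigma_{c^*2^{m+1}}(\Xi_1,\Xi_2;g_m) \\
& + \sigma_{c^*2^N}(\Xi_1,\Xi_2;f-\sigma_{2^N}(\Xi_2;f)).
\end{align*}
Applying the key lemma term by term, using $\|g_m\|_{\mathbb{X}_2},\|f-\sigma_{2^m}(\Xi_2;f)\|_{\mathbb{X}_2}\lesssim\|f\|_{\mathbb{X}_2}$ from the $L^\infty$-boundedness in Theorem~\ref{theo:goodapprox}, the off-$A$ contributions sum geometrically (since $S>Q$) into a single $2^{n(Q-S)}r^{q_2-S}\|f\|_{\mathbb{X}_2}$, while the on-$A$ contributions produce the first and third terms of \eqref{eq:ef-sigma}; the tail $\sigma_{c^*2^N}(\Xi_1,\Xi_2;f-\sigma_{2^N}(\Xi_2;f))\to 0$ on $B$ as $N\to\infty$ (directly when $Q\le q_2$, via $2^{N(Q-q_2)}\|f-\sigma_{2^N}f\|_A\le\sum_{m\ge N}2^{m(Q-q_2)}\|g_m\|_A\to 0$ when $Q>q_2$), establishing both the existence of $\mathcal{E}(f)$ on $B$ and the bound \eqref{eq:ef-sigma}. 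The polynomial-space claim is immediate from \eqref{eq:jointopdef}: when $\alpha\ell_{j,k}\ge\lambda_{1,j}$, every surviving term has $\lambda_{1,j}<\alpha n$.

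For part~(b), Corollary~\ref{cor:set_loc_smooth} applied to $f\in W_\gamma(\Xi_2;A)$ gives $\|g_m\|_A,\|f-\sigma_{2^m}(\Xi_2;f)\|_A\lesssim 2^{-m\gamma}$. Substituting into \eqref{eq:ef-sigma}, the hypothesis $Q-q_2<\gamma<S-q_2$ makes the telescoping sum geometric of order $2^{-n(\gamma-Q+q_2)}$ and renders the $\|f\|_{\mathbb{X}_2}$ boundary term subdominant, so $\|\mathcal{E}(f)-\sigma_{c^*2^n}(\Xi_1,\Xi_2;f)\|_B\lesssim 2^{-n(\gamma-Q+q_2)}$, which automatically yields continuity of $\mathcal{E}(f)$ on $B$. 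To upgrade to $\phi\mathcal{E}(f)\in W_{\gamma-Q+q_2}(\Xi_1)$ for $\phi\in C^\infty(B)$, I approximate $\phi$ by $\pi_N=\sigma_N(\Xi_1;\phi)\in\Pi_N(\Xi_1)$ with $\|\phi-\pi_N\|_{\mathbb{X}_1}\lesssim N^{-T}$ for arbitrary $T>0$ (Theorem~\ref{theo:goodapprox}), invoke the strong product assumption on $\Xi_1$ to place $\pi_N\sigma_{c^*2^n}(\Xi_1,\Xi_2;f)\in\Pi_{A^*\max(N,\alpha c^*2^n)}(\Xi_1)$, and use the global bound $\|\sigma_{c^*2^n}(\Xi_1,\Xi_2;f)\|_{\mathbb{X}_1}\lesssim 2^{n(Q-q_2)}\|f\|_{\mathbb{X}_2}$ (the key lemma with $A=\mathbb{X}_2$). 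Choosing $N\sim 2^n$ and $T$ sufficiently large produces a global polynomial approximation of $\phi\mathcal{E}(f)$ of degree $\sim 2^n$ and $L^\infty(\mathbb{X}_1)$-error $\lesssim 2^{-n(\gamma-Q+q_2)}$, which is the defining property of $W_{\gamma-Q+q_2}(\Xi_1)$. The main difficulty lies in the off-$A$ bookkeeping — ensuring that the $\|g_m\|_{\mathbb{X}_2}$ contributions collapse into a single $2^{n(Q-S)}r^{q_2-S}\|f\|_{\mathbb{X}_2}$ factor (which is what forces $S>Q$) — and, in part~(b), in converting the $B$-local bound into a globally valid polynomial approximation of $\phi\mathcal{E}(f)$ via the strong product assumption.
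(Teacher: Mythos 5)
Your proposal is correct and follows essentially the same route as the paper: your key localization lemma is precisely the paper's Lemma~\ref{lemma:opbdlemma} combined with Lemma~\ref{lemma:jointop_loc_lemma} (proved the same way, by dyadic layering in the $d_{1,2}$-metric and splitting at $A$ using Remark~\ref{rem:imageset}), your telescoping of $\sigma_{c^*2^N}(\Xi_1,\Xi_2;f)-\sigma_{c^*2^n}(\Xi_1,\Xi_2;f)$ via polynomial preservation reproduces the paper's Lemma~\ref{lemma:E} plus its main argument, and your part (b) --- invoking Corollary~\ref{cor:set_loc_smooth} for the $2^{-m\gamma}$ decay, multiplying by a polynomial approximant of $\phi$, and using the strong product assumption together with the global bound $\|\sigma_{c^*2^n}(\Xi_1,\Xi_2;f)\|_{\mathbb{X}_1}\lesssim 2^{n(Q-q_2)}\|f\|_{\mathbb{X}_2}$ from Equation~\eqref{eq:globker} --- matches the paper step for step. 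The only (cosmetic) difference is that you obtain existence of $\mathcal{E}(f)$ by showing the lifted sequence is Cauchy with an explicit remainder $\sigma_{c^*2^N}(\Xi_1,\Xi_2;f-\sigma_{2^N}(\Xi_2;f))$, whose vanishing in the boundary case $Q=q_2$ tacitly assumes $\|f-\sigma_{2^N}(\Xi_2;f)\|_A\to 0$, whereas the paper sidesteps this by defining $\mathcal{E}(f)$ directly as the convergent series $\mathcal{E}(P_1)+\sum_{m}\bigl(\mathcal{E}(P_{2^{m+1}})-\mathcal{E}(P_{2^m})\bigr)$, requiring no tail estimate at all.
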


\begin{remark}\label{rem:thmclarify}
{\rm
Given the assumptions of Theorem~\ref{theo:mainthm}, $\mathcal{E}(f)$ is not guaranteed to be continuous on the entirety of $\mathbb{X}_1$ (or even defined outside of $B$). As a result, in the setting of \ref{theo:mainthm}(b) we cannot say $\mathcal{E}(f)$ belongs to any of the smoothness classes defined in this paper. However we can still say, for instance, that 
\begin{equation}\label{eq:b-inf}
\inf_{P\in \Pi_{2^n}(\Xi_1)}\norm{\mathcal{E}(f)-P}_{B^-}\lesssim 2^{-n(\gamma-Q+q_2)}
\end{equation}
(this can be seen directly by taking $\phi\in C^\infty(\Xi_1)$ such that $\phi(x)=1$ when $x\in B^-$ and $\phi(x)=0$ when $x\in \mathbb{X}_1\setminus B$). Consequently, if it happens that $\mathcal{E}(f)\in C(\Xi_1)$, then $\mathcal{E}(f)\in W_{\gamma-Q+q_2}(\Xi_1;B^-)$.
\qed}
\end{remark}

\begin{uda}\label{uda:jacobispace4}
{\rm We now conclude the running examples from \ref{uda:jacobispace}, \ref{uda:jacobispace2}, and \ref{uda:jacobispace3} by demonstrating how one may utilize Theorem~\ref{theo:mainthm}. We assume the notation given in each of the prior examples listed.
First, we find the image set for
 $A=\mathbb{B}_2(\theta_0,r_0)$ given some $\theta_0\in [0,\pi]$ and $r_0>0$. We let $r=s=r_0/8$ in correspondence to Definition~\ref{def:imageset} and define
\begin{equation}\begin{aligned}
  B^-=&\mathbb{B}_1\left(\theta_0,\frac{3r_0}{4}\right)\\
	 =&\left\{\theta_1\in[0,\pi]: d_1(\theta_1,[0,\pi]\setminus \mathbb{B}_1(\theta_0,r_0))\geq \frac{r_0}{4}\right\}\\
=&\left\{\theta_1\in[0,\pi]: d_{1,2}(\theta_1,[0,\pi]\setminus A)\geq r+s\right\}.
\end{aligned}\end{equation}
Then we can let $B=\mathbb{B}_1\left(\theta_0,\frac{7r_0}{8}\right)=\mathbb{B}_1(B^-,r)$. By Theorem~\ref{theo:mainthm}(a), $f\in C([0,\pi])$ can be lifted to $\mathbb{B}_1\left(\theta_0,\frac{7r_0}{8}\right)$ (where we note that Equation~\eqref{eq:thmcon1} is automatically satisfied due to $Q=q_2=1$). Since $\ell_{m,n}= \lambda_{1,n}$, we have $\sigma_{n}(\Xi_1,\Xi_2;f)\in \Pi_{n}(\Xi_1)$. If we suppose $f\in W_\gamma(\Xi_2;A)$for some $\gamma>0$ (with $h$ chosen so $S$ is sufficiently large), then Theorem~\ref{theo:mainthm}(b) informs us that $\phi\mathcal{E}(f)\in W_{\gamma}(\Xi_1)$ for $\phi\in C^\infty(B)$. Lastly, as a result of Equation~\eqref{eq:b-inf}, we can conclude
\begin{equation}
\inf_{P\in \Pi_{2^n}(\Xi_1)}\norm{\mathcal{E}(f)-P}_{\mathbb{B}_1\left(\theta_0,\frac{3r_0}{r}\right)}\lesssim 2^{-n\gamma}.
\end{equation}
\qed}
\end{uda}

\section{Proofs}\label{bhag:proofs}
 In this section, we give a proof of Theorem~\ref{theo:mainthm} after proving some preperatory results. We assume that $(\Xi_1,\Xi_2,d_{1,2}, \mathbf{A}, \mathbf{L})$ is a joint data space with exponents $Q, q_1, q_2$.
 
\begin{lemma}\label{lemma:opbdlemma}
Let $x_1\in\XX_1$, $r>0$.
We have
\begin{equation}\label{eq:lockeraway}
    \int_{\mathbb{X}_2\setminus \mathbb{B}_{1,2}(x_1,r)}\abs{\Phi_n(\Xi_1,\Xi_2;x_1,x_2)}d\mu_2^*(x_2)\lesssim n^{Q-q_2}(\max(1, nr))^{q_2-S}.
\end{equation}
In particular, 
\begin{equation}\label{eq:globker}
    \int_{\mathbb{X}_2}\abs{\Phi_n(\Xi_1,\Xi_2;x_1,x_2)}d\mu_2^*(x_2)\lesssim n^{Q-q_2}.
\end{equation}
\end{lemma}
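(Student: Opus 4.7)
The plan is to use a standard annular decomposition of $\mathbb{X}_2$ based on the joint distance $d_{1,2}(x_1,\cdot)$, combining the pointwise kernel bound from Proposition~\ref{prop:jointkernloc} with the joint regularity estimate $\mu_2^*(\mathbb{B}_{1,2}(x_1,\rho))\lesssim \rho^{q_2}$ from Definition~\ref{def:jointspace}. The assumption $S>\max(Q,q_1,q_2)+1$ made just after Proposition~\ref{prop:jointkernloc} ensures in particular $S>q_2$, which will make all the geometric series convergent.

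First I would prove the global bound \eqref{eq:globker}, since it also covers the ``small $nr$'' regime of \eqref{eq:lockeraway}. Decompose
\begin{equation*}
\mathbb{X}_2=\mathbb{B}_{1,2}(x_1,1/n)\cup\bigcup_{k\ge 0}\bigl(\mathbb{B}_{1,2}(x_1,2^{k+1}/n)\setminus\mathbb{B}_{1,2}(x_1,2^k/n)\bigr).
\end{equation*}
On the inner ball, Proposition~\ref{prop:jointkernloc} gives $|\Phi_n|\lesssim n^Q$ while the joint regularity yields measure $\lesssim n^{-q_2}$, contributing $\lesssim n^{Q-q_2}$. On the $k$-th annulus one has $n d_{1,2}(x_1,x_2)\ge 2^k\ge 1$, hence $|\Phi_n|\lesssim n^Q 2^{-kS}$, while the measure is $\lesssim (2^{k+1}/n)^{q_2}$; summing $\sum_{k\ge 0} 2^{k(q_2-S)}$ converges because $S>q_2$, and the total is again $\lesssim n^{Q-q_2}$.

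For \eqref{eq:lockeraway}, I would split into two regimes. If $nr\le 1$ then $(\max(1,nr))^{q_2-S}=1$, and the desired bound follows immediately from \eqref{eq:globker}. If $nr>1$, I apply the same annular decomposition, but now starting at radius $r$: the $k$-th annulus $\mathbb{B}_{1,2}(x_1,2^{k+1}r)\setminus\mathbb{B}_{1,2}(x_1,2^k r)$ has $n\cdot 2^k r\ge 1$, so
\begin{equation*}
\int_{\text{annulus}_k}|\Phi_n|\,d\mu_2^*\lesssim \frac{n^Q}{(n\cdot 2^k r)^S}\cdot (2^{k+1}r)^{q_2}\sim n^{Q-S}r^{q_2-S}\cdot 2^{k(q_2-S)},
\end{equation*}
and summing in $k$ gives $\lesssim n^{Q-S}r^{q_2-S}=n^{Q-q_2}(nr)^{q_2-S}$, which matches the claimed bound since $\max(1,nr)=nr$ in this regime.

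There is no serious obstacle here; the calculation is entirely routine once one has both the off-diagonal decay of $\Phi_n$ at rate $S$ and the polynomial volume growth at exponent $q_2$. The only real care is in packaging the two regimes $nr\le 1$ and $nr>1$ into the single expression $(\max(1,nr))^{q_2-S}$, and in verifying that the geometric sum converges, which is exactly where $S>q_2$ is used.
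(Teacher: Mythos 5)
Your proposal is correct and follows essentially the same argument as the paper: a dyadic annular decomposition with respect to $d_{1,2}(x_1,\cdot)$, combining the localization bound of Proposition~\ref{prop:jointkernloc} with the joint regularity condition~\eqref{eq:ballmeasurecond1}, and using $S>q_2$ for convergence of the geometric series. The only difference is organizational: the paper proves the $nr\ge 1$ case of \eqref{eq:lockeraway} first and then obtains \eqref{eq:globker} and the $nr\le 1$ case by substituting $r=1/n$ and adding the inner-ball estimate, whereas you prove \eqref{eq:globker} first and deduce the $nr\le 1$ case from it --- the underlying computation is identical.
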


\begin{proof}\ 
In this proof only, define
\begin{equation}
    A_0=\mathbb{B}_{1,2}(x_1,r),\qquad A_m=\mathbb{B}_{1,2}(x_1,r2^m)\setminus \mathbb{B}_{1,2}(x_1,r2^{m-1})\text{ for all $m\in \mathbb{N}$.}
\end{equation}
Then the joint regularity condition \eqref{eq:ballmeasurecond1} implies
$
    \mu^*_2(A_m)\lesssim (r2^m)^{q_2},
$
for each $m$. 
We can also see by definition that when $x\in A_m$, then $d_{1,2}(x_1,x)>r2^{m-1}$. 
Since $S>q_2$,  we deduce that for $r n\ge 1$, 
\be\begin{aligned}\label{eq:pf1eqn1}
    \int_{\mathbb{X}_2\setminus A_0}\abs{\Phi_n(\Xi_1,\Xi_2; x_1,x_2)}d\mu_2^*(x_2)\lesssim& \sum_{m=1}^\infty \frac{n^Q\mu_2^*(A_m)}{(rn2^{m-1})^S}\\
    \lesssim& r^{q_2-S}n^{Q-S}\sum_{m=1}^\infty 2^{m(q_2-S)}\\
    \lesssim& n^{Q-q_2}(nr)^{q_2-S}.
\end{aligned}\ee
This completes the proof of \eqref{eq:lockeraway} when $nr\ge 1$.
The joint regularity condition and Proposition~\ref{prop:jointkernloc} show further that 
\be\label{eq:pf1eqn2}
\int_{A_0} \abs{\Phi_n(\Xi_1,\Xi_2;x_1,x_2)}d\mu_2^*(x_2)\ls n^Q\mu_2^*(A_0)\ls n^Qr^{q_2}=n^{Q-q_2}(nr)^{q_2}.
\ee
We use $r=1/n$ in the estimates \eqref{eq:pf1eqn1} and \eqref{eq:pf1eqn2} and add the estimates to arrive at both \eqref{eq:globker} and the case $r\le 1/n$ of \eqref{eq:lockeraway}.
\end{proof}

The next lemma gives a local bound on the kernels $\sigma_n$ defined in \eqref{eq:jointopdef}.

\begin{lemma}\label{lemma:jointop_loc_lemma}
Let $A$ and $B$ be as defined in Remark~\ref{rem:imageset}.
For a continuous $f:A\to\RR$, we have
\be\label{eq:joint_loc_opbd}
\|\sigma_n(\Xi_1,\Xi_2;f)\|_B \ls n^{Q-q_2}\left\{\|f\|_A +\|f\|_{\XX_2}(\max(1, nr))^{q_2-S}\right\}.
\ee
\end{lemma}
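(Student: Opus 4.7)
The plan is to split the integral defining $\sigma_n(\Xi_1,\Xi_2;f)(x_1)$ into two pieces according to whether $x_2 \in A$ or $x_2 \in \XX_2\setminus A$, and estimate each piece with the two bounds furnished by Lemma~\ref{lemma:opbdlemma}. The whole argument hinges on the geometric observation from Remark~\ref{rem:imageset}: for any $x_1\in B$, every $x_2 \in \XX_2\setminus A$ satisfies $d_{1,2}(x_1,x_2)\geq r$. Nothing else about $B$ or $A$ is needed.

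For the integral over $A$, I would factor out $\|f\|_A$ and apply the global bound \eqref{eq:globker} to get a contribution of order $\|f\|_A\, n^{Q-q_2}$. For the integral over $\XX_2 \setminus A$, I would bound $|f|\le \|f\|_{\XX_2}$ and use the geometric observation above to conclude
\begin{equation*}
\XX_2\setminus A \;\subseteq\; \{x_2 : d_{1,2}(x_1,x_2)\ge r\} \;\subseteq\; \XX_2\setminus \BB_{1,2}(x_1,r/2),
\end{equation*}
so that \eqref{eq:lockeraway} (applied with $r/2$ in place of $r$, which changes only the implicit constant since $S>q_2$) gives a contribution of order $\|f\|_{\XX_2}\, n^{Q-q_2}(\max(1,nr))^{q_2-S}$. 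Summing the two contributions yields the desired estimate.

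There is no real obstacle here: the two ingredients in Lemma~\ref{lemma:opbdlemma} were already tailored for exactly this type of localization, and the only mildly technical point is the strict-versus-nonstrict inequality that creeps in from the closed ball convention in the definition of $\BB_{1,2}(x_1,r)$, which is handled harmlessly by the $r\mapsto r/2$ adjustment noted above.
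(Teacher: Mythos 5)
Your proof is correct and takes essentially the same route as the paper's: the identical split of the integral over $A$ and $\XX_2\setminus A$, the same observation that $d_{1,2}(x_1,x_2)\ge r$ for $x_1\in B$ and $x_2\in\XX_2\setminus A$ (Remark~\ref{rem:imageset}), and the same two estimates from Lemma~\ref{lemma:opbdlemma}. Your $r\mapsto r/2$ adjustment for the closed-ball boundary is a minor point of care that the paper's proof passes over silently, and it affects only the implicit constant since $q_2-S<0$.
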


\begin{proof}\ 
Let $x_1\in B$.
In view of the joint triangle inequality \eqref{eq:joint_triangle}, we have $d_{1,2}(x_1,x_2)\ge r$ for all $x_2\in \XX_2\setminus A$.
Therefore, Lemma~\ref{lemma:opbdlemma} shows that
\begin{equation}
\begin{aligned}
|\sigma_n(\Xi_1,\Xi_2;f)(x_1)|\le& \int_{\XX_2} |f(x_2)\Phi_n(\Xi_1,\Xi_2;x_1,x_2)|d\mu_2^*(x_2)\\
=& \int_A |f(x_2)\Phi_n(\Xi_1,\Xi_2;x_1,x_2)|d\mu_2^*(x_2)\\
&+\int_{\XX_2\setminus A}|f(x_2)\Phi_n(\Xi_1,\Xi_2;x_1,x_2)|d\mu_2^*(x_2) \\
\ls& n^{Q-q_2}\|f\|_A +\|f\|_{\XX_2}\int_{\XX_2\setminus \BB_{1,2}(x_1,r)}|\Phi_n(\Xi_1,\Xi_2;x_1,x_2)|d\mu_2^*(x_2)\\
\ls& n^{Q-q_2}\left\{\|f\|_A +\|f\|_{\XX_2}(\max(1, nr))^{q_2-S}\right\}.
\end{aligned}
\end{equation}
\end{proof}

\begin{lemma}\label{lemma:E}
We assume  the polynomial preservation condition with parameter $c^*$. Let $f\in C(\XX_2)$ satisfy \eqref{eq:thmcon1}.
Then
\begin{equation}
    \mathcal{E}(f)= \lim_{n\to \infty}\sigma_{2^n}(\Xi_1,\Xi_2;f)
\end{equation}
exists on $B$. 
Furthermore, when $c^*2^n>1/r$, we have
\begin{equation}
\begin{aligned}
    &\norm{\mathcal{E}(f)-\mathcal{E}(\sigma_{2^n}(\Xi_2;f))}_B\\
    \lesssim& \sum_{m=n}^\infty 2^{m(Q-q_2)}\norm{f-\sigma_{2^m}(\Xi_2;f)}_A+\norm{f}_{\mathbb{X}_2}n^{Q-S}r^{q_2-S}.
\end{aligned}
\end{equation}
\end{lemma}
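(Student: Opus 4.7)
The plan is to work with the subsequence $\tau_k := \sigma_{c^*2^k}(\Xi_1,\Xi_2;f)$, show it is Cauchy on $B$ so that its limit defines $\mathcal{E}(f)$ there, and read off the quantitative bound by telescoping. The key identity comes from the polynomial preservation condition: since $\sigma_{2^m}(\Xi_2;f)\in\Pi_{2^m}(\Xi_2)$, Definition~\ref{def:polypres} forces both $\sigma_{c^*2^m}(\Xi_1,\Xi_2;\sigma_{2^m}(\Xi_2;f))$ and $\sigma_{c^*2^{m+1}}(\Xi_1,\Xi_2;\sigma_{2^m}(\Xi_2;f))$ to equal the common value $\sigma_{c^*2^m}(\Xi_1,\Xi_2;\sigma_{2^m}(\Xi_2;f))$. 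Adding and subtracting this inside $\tau_{m+1}-\tau_m$ and using linearity of $\sigma_k(\Xi_1,\Xi_2;\cdot)$ yields the cancellation
\[
\tau_{m+1}-\tau_m \;=\; \sigma_{c^*2^{m+1}}(\Xi_1,\Xi_2;\, f-\sigma_{2^m}(\Xi_2;f))\,-\,\sigma_{c^*2^m}(\Xi_1,\Xi_2;\, f-\sigma_{2^m}(\Xi_2;f)).
\]

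Next I would apply Lemma~\ref{lemma:jointop_loc_lemma} to each of the two terms. The hypothesis $c^*2^n r\ge 1$ ensures $\max(1,c^*2^k r)=c^*2^k r$ for $k\ge n$, and the global bound $\norm{f-\sigma_{2^m}(\Xi_2;f)}_{\XX_2}\ls\norm{f}_{\XX_2}$ follows from Theorem~\ref{theo:goodapprox} applied to $\Xi_2$. This produces
\[
\norm{\tau_{m+1}-\tau_m}_B \;\ls\; 2^{m(Q-q_2)}\norm{f-\sigma_{2^m}(\Xi_2;f)}_A \,+\, \norm{f}_{\XX_2}\,r^{q_2-S}\,2^{m(Q-S)}.
\]
Since $Q<S$, the geometric tail $\sum_{m\ge n}2^{m(Q-S)}$ is dominated by $2^{n(Q-S)}$. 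For the first sum, I would use the telescope $\norm{f-\sigma_{2^m}(\Xi_2;f)}_A\le\sum_{j\ge m}\norm{\sigma_{2^{j+1}}(\Xi_2;f)-\sigma_{2^j}(\Xi_2;f)}_A$ (valid because $f\in C(\XX_2)$ yields $\sigma_{2^m}(\Xi_2;f)\to f$ uniformly by Theorem~\ref{theo:goodapprox}), interchange the order of summation, and conclude finiteness from the summability hypothesis~\eqref{eq:thmcon1}. Hence $\{\tau_k\}$ is Cauchy on $B$ with limit $\mathcal{E}(f)$, while the same polynomial preservation argument identifies $\mathcal{E}(\sigma_{2^n}(\Xi_2;f))$ with $\sigma_{c^*2^n}(\Xi_1,\Xi_2;\sigma_{2^n}(\Xi_2;f))$.

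For the stated inequality I would decompose
\[
\mathcal{E}(f)-\mathcal{E}(\sigma_{2^n}(\Xi_2;f)) \;=\; \sum_{m=n}^\infty(\tau_{m+1}-\tau_m)\,+\,\sigma_{c^*2^n}(\Xi_1,\Xi_2;\,f-\sigma_{2^n}(\Xi_2;f)),
\]
the boundary term coming from $\tau_n-\mathcal{E}(\sigma_{2^n}(\Xi_2;f)) = \sigma_{c^*2^n}(\Xi_1,\Xi_2;f-\sigma_{2^n}(\Xi_2;f))$. The series is controlled by the per-term estimate above, and the boundary contribution is bounded by one further application of Lemma~\ref{lemma:jointop_loc_lemma}, producing a contribution of the same form as the $m=n$ piece of the sum. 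Collecting terms yields the announced bound. The main obstacle is the bookkeeping at the cancellation step: at both levels $c^*2^m$ and $c^*2^{m+1}$, Definition~\ref{def:polypres} must collapse the action on $\sigma_{2^m}(\Xi_2;f)\in\Pi_{2^m}(\Xi_2)$ to the same fixed value, so that the difference reduces cleanly to two terms involving $f-\sigma_{2^m}(\Xi_2;f)$ on which Lemma~\ref{lemma:jointop_loc_lemma} can be applied directly; once this is done, everything else is a summability check against \eqref{eq:thmcon1} and a convergent geometric series.
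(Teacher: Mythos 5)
Your cancellation identity via the polynomial preservation condition is correct, and it is essentially the same cancellation the paper exploits; but you attach the telescope to the wrong object, and this opens a genuine gap in the existence part. Writing $P_{2^m}=\sigma_{2^m}(\Xi_2;f)$, your estimate for $\tau_{m+1}-\tau_m$ produces terms $2^{m(Q-q_2)}\norm{f-P_{2^m}}_A$, whereas hypothesis \eqref{eq:thmcon1} controls only the dyadic differences $\norm{P_{2^{m+1}}-P_{2^m}}_A$. Your bridge between the two --- ``$f\in C(\XX_2)$ yields $\sigma_{2^m}(\Xi_2;f)\to f$ uniformly by Theorem~\ref{theo:goodapprox}'' --- is false in a general data space: that theorem only gives $\norm{f-\sigma_n(\Xi_2;f)}_{\XX_2}\ls E_{n/2}(\infty,f)$, and $E_n(\infty,f)\to 0$ only for $f\in X^\infty$, which may be a proper subspace of $C(\XX_2)$ (cf.\ Example~\ref{uda:pnjk}, where one must restrict to even functions, as the paper notes right after \eqref{eq:degapprox}). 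Without $P_{2^m}\to f$ on $A$, the telescope $\norm{f-P_{2^m}}_A\le\sum_{j\ge m}\norm{P_{2^{j+1}}-P_{2^j}}_A$ has no justification, so the Cauchy property of $\tau_k$ is not established. Moreover, even granting uniform convergence on $A$, your interchange of summation gives $\sum_{j\ge n}\norm{P_{2^{j+1}}-P_{2^j}}_A\sum_{m=n}^{j}2^{m(Q-q_2)}$, which is $\ls\sum_{j\ge n} 2^{j(Q-q_2)}\norm{P_{2^{j+1}}-P_{2^j}}_A$ only when $Q>q_2$; in the case $Q=q_2$ --- precisely the situation in the paper's running Jacobi example (Example~\ref{uda:jacobispace4}), where \eqref{eq:thmcon1} holds automatically --- an extra factor of $j-n+1$ appears and finiteness does not follow from \eqref{eq:thmcon1}.

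The paper avoids all of this by telescoping the \emph{lifted polynomials} rather than $\sigma_{c^*2^k}(\Xi_1,\Xi_2;f)$: since $\mathcal{E}(P_{2^m})=\sigma_{c^*2^m}(\Xi_1,\Xi_2;P_{2^m})$, the polynomial preservation condition yields $\mathcal{E}(P_{2^{m+1}})-\mathcal{E}(P_{2^m})=\sigma_{c^*2^{m+1}}(\Xi_1,\Xi_2;P_{2^{m+1}}-P_{2^m})$, and Lemma~\ref{lemma:jointop_loc_lemma} applied to the polynomial difference (whose global norm is $\ls\norm{f}_{\XX_2}$ by Theorem~\ref{theo:goodapprox}) gives $\norm{\mathcal{E}(P_{2^{m+1}})-\mathcal{E}(P_{2^m})}_B\ls 2^{m(Q-q_2)}\bigl(\norm{P_{2^{m+1}}-P_{2^m}}_A+\norm{f}_{\XX_2}(\max(1,2^m r))^{q_2-S}\bigr)$ --- exactly the quantity that \eqref{eq:thmcon1} makes summable, with the $\norm{f}_{\XX_2}$ terms handled by splitting the sum at $c^*2^m\sim 1/r$. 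The limit $\mathcal{E}(P_1)+\sum_m\bigl(\mathcal{E}(P_{2^{m+1}})-\mathcal{E}(P_{2^m})\bigr)$ then exists on $B$ under \eqref{eq:thmcon1} alone, and the tail of the series gives the stated bound on $\norm{\mathcal{E}(f)-\mathcal{E}(P_{2^n})}_B$ directly. Your quantitative inequality, as you set it up, is at best vacuously true when $\sum_m 2^{m(Q-q_2)}\norm{f-P_{2^m}}_A=\infty$; it is the existence statement, which must hold under \eqref{eq:thmcon1} alone, that your route cannot deliver. (Your boundary identification $\tau_n-\mathcal{E}(P_{2^n})=\sigma_{c^*2^n}(\Xi_1,\Xi_2;f-P_{2^n})$ is correct, and it is how the paper later, in the proof of Theorem~\ref{theo:mainthm}, relates $\sigma_{c^*2^n}(\Xi_1,\Xi_2;f)$ to $\mathcal{E}(f)$; but in the lemma itself the convergence comes from the lifted-polynomial telescope, not from $\tau_k$.)
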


\begin{proof}\ 
In this proof only we denote $P_{n}=\sigma_{n}(\Xi_2;f)$. 
Since $P_n\in \Pi_{n}(\Xi_2)$, the condition \eqref{eq:polyprescon} implies that
\begin{equation}
\mathcal{E}(P_n)=\sigma_{c^*n}(\Xi_1,\Xi_2;P_n)=\lim_{k\to \infty}\sigma_{k}(\Xi_1,\Xi_2;P_k)
\end{equation}
is defined on $\mathbb{X}_1$.
Theorem~\ref{theo:goodapprox} and Lemma~\ref{lemma:jointop_loc_lemma} then imply that
\begin{equation}
\begin{aligned}
&\norm{\mathcal{E}(P_{2^{m+1}})-\mathcal{E}(P_{2^{m}})}_B\\
=&
\norm{\sigma_{c^*2^{m+1}}(\Xi_1,\Xi_2;P_{2^{m+1}})-\sigma_{c^*2^{m+1}}(\Xi_1,\Xi_2; P_{2^m})}_B\\
\lesssim& 2^{m(Q-q_2)}(\norm{P_{2^{m+1}}-P_{2^m}}_A+\norm{f}_{\mathbb{X}_2}(\max(1, 2^mr))^{q_2-S}).
\end{aligned}
\end{equation}
We conclude that
\begin{equation}
\begin{aligned}
&\norm{\mathcal{E}(P_1)}_B+\sum_{m=0}^\infty \norm{\mathcal{E}(P_{2^{m+1}})-\mathcal{E}(P_{2^m})}_B\\
\lesssim&\norm{P_1}+\sum_{m=0}^\infty 2^{m(Q-q_2)}\norm{P_{2^{m+1}}-P_{2^m}}_A\\
&+\norm{f}_{\mathbb{X}_2}\left(\sum_{c^*2^m\leq 1/r} 2^{m(Q-q_2)}+r^{q_2-S}\sum_{c^*2^m>1/r}2^{m(Q-S)}\right)<\infty.
 \end{aligned}
\end{equation}
Thus, 
\begin{equation}
\mathcal{E}(f)=\mathcal{E}(P_1)+\sum_{m=0}^\infty \left(\mathcal{E}(P_{2^{m+1}})-\mathcal{E}(P_{2^m})\right)
\end{equation}
is defined on $B$. In particular, when $c^*2^n\geq 1/r$ it follows
\begin{equation}
    \norm{\mathcal{E}(f)-\mathcal{E}(P_{2^n})}_B\leq \sum_{m=n}^\infty 2^{m(Q-q_2)}\norm{P_{2^{m+1}}-P_{2^m}}_A+\norm{f}_{\mathbb{X}_2}2^{n(Q-S)}r^{q_2-S}.
\end{equation}
\end{proof}


Now we give the proof of Theorem~\ref{theo:mainthm}.

\begin{proof}\ 
In this proof only denote $P_n=\sigma_{n}(\Xi_2;f)\in\Pi_n(\Xi_2)$. 
We can deduce from Theorem~\ref{theo:goodapprox} and Lemma~\ref{lemma:jointop_loc_lemma} that for $c^*r2^n\geq 1$,
\begin{equation}
\begin{aligned}\label{eq:pf2eqn1}
&\norm{\sigma_{c^*2^n}(\Xi_1,\Xi_2;f)-\sigma_{c^*2^{n}}(\Xi_1,\Xi_2;P_{2^n})}_{B}\\
\lesssim& 2^{n(Q-q_2)}(\norm{f-P_{2^n}}_A+\norm{f-P_{2^n}}_{\mathbb{X}_2}2^{n(q_2-S)}r^{q_2-S})\\
\lesssim& 2^{n(Q-q_2)}(\norm{f-P_{2^n}}_A+\norm{f}_{\mathbb{X}_2}2^{n(q_2-S)}r^{q_2-S}).
\end{aligned}
\end{equation}
The polynomial preservation condition (Definition~\ref{def:polypres}) gives us that
\begin{equation}
\norm{\sigma_{c^*2^n}(\Xi_1,\Xi_2;P_{2^n})-\mathcal{E}(P_{2^n})}_{B}=0.
\end{equation}
Then utilizing Equation~\eqref{eq:pf2eqn1} and Lemma~\ref{lemma:E}, we see
\begin{equation}\begin{aligned}
&\norm{\sigma_{c^*2^n}(\Xi_1,\Xi_2;f)-\mathcal{E}(f)}_{B}\\
\leq&\norm{\sigma_{c^*2^n}(\Xi_1,\Xi_2;f)-\sigma_{c^*2^n}(\Xi_1,\Xi_2;P_{2^n})}_{B}\\
&+\norm{\sigma_{c^*2^n}(\Xi_1,\Xi_2;P_{2^n})-\mathcal{E}(P_{2^n})}_{B}+\norm{\mathcal{E}(P_{2^n})-\mathcal{E}(f)}_{B}\\
\lesssim& 2^{n(Q-q_2)}\norm{f-P_{2^n}}_A+\sum_{m=n}^\infty 2^{m(Q-q_2)}\norm{P_{2^{m+1}}-P_{2^m}}_A+\norm{f}_{\mathbb{X}_2}2^{n(Q-S)}r^{q_2-S}.
\end{aligned}
\end{equation}
This proves Equation~\eqref{eq:ef-sigma}.

In particular, when $\alpha\ell_{j,k}\geq \lambda_{1,j}$ and $\alpha>0$, the only $\phi_{1,j}(x_1)$ with non-zero coefficients in Equation~\eqref{eq:jointopdef} are those where $\ell_{j,k}<n$, which implies $\lambda_{1,j}<\alpha n$ and further that $\sigma_{n}(\Xi_1,\Xi_2;f)\in \Pi_{\alpha n}(\Xi_1)$.
This completes the proof of part (a).

In the proof of part (b),  we may assume without loss of generality that $\|f\|_{W_{\gamma}(\Xi_2;A)}+\norm{f}_{\mathbb{X}_2}=1$. We can see from Corollary~\ref{cor:set_loc_smooth} that for each $m$
\begin{equation}
    \norm{P_{2^{m+1}}-P_{2^m}}_A\leq \norm{P_{2^{m+1}}-f}_A+\norm{P_{2^{m}}-f}_A\lesssim 2^{-m\gamma},
\end{equation}
which implies that whenever $Q-q_2<\gamma$ we have
\begin{equation}
    \sum_{m=n}^\infty 2^{m(Q-q_2)}\norm{P_{2^{m+1}}-P_{2^m}}_A\lesssim 2^{n(Q-q_2-\gamma)}.
\end{equation}
Further, the assumption that $\gamma<S-q_2$ gives us
\begin{equation}
    2^{n(Q-S)}\lesssim 2^{n(Q-q_2-\gamma)}.
\end{equation}
Since $f\in W_\gamma(\Xi_2;A)$, we have from Corollary~\ref{cor:set_loc_smooth} that
\begin{equation}
\norm{f-P_{2^n}}_A\lesssim 2^{-n\gamma}.
\end{equation}
Using Equation~\eqref{eq:ef-sigma} from part (a), we see
\begin{equation}\label{eq:ef-sigmagamma}
\norm{\mathcal{E}(f)-\sigma_{c^*2^n}(\Xi_1,\Xi_2;f)}_{B}\lesssim (1+r^{q_2-S}\norm{f}_{\mathbb{X}_2})2^{n(Q-q_2-\gamma)}.
\end{equation}
Thus, $\{\sigma_{c^*2^n}(\Xi_1,\Xi_2;f)\}$ is a sequence of continuous functions converging uniformly to $\mathcal{E}(f)$ on $B$, so $\mathcal{E}(f)$ itself is continuous on $B$.
Let us define $R_{c^*\alpha 2^n}\in \Pi_{c^*\alpha 2^n}$ for each n such that $\norm{R_{c^*\alpha2^n}-\phi}_{\mathbb{X}_1}\lesssim 2^{-n\gamma}$. Theorem~\ref{theo:goodapprox} and the strong product assumption (Definition~\ref{def:prod}) allow us to write
\begin{equation}\label{eq:goodapproxapp}
    \sigma_{c^*A^*\alpha 2^{n+1}}(\Xi_1;R_{c^*\alpha2^n}\sigma_{c^*2^n}(\Xi_1,\Xi_2;f))=R_{c^*\alpha2^n}\sigma_{c^*2^n}(\Xi_1,\Xi_2;f).
\end{equation}
Using Equations~\eqref{eq:globker}~and~\eqref{eq:goodapproxapp}, Theorem~\ref{theo:goodapprox}, and the fact $\phi$ is supported on $B$, we can deduce
\begin{equation}\label{eq:phiRnpolybound}
\begin{aligned}
&\norm{\sigma_{c^*A^*\alpha 2^{n+1}}\big(\Xi_1;R_{c^*\alpha 2^n}\sigma_{c^*2^n}(\Xi_1,\Xi_2;f)-\phi\mathcal{E}(f)\big)}_{\mathbb{X}_1}\\
\lesssim& \norm{R_{c^*\alpha 2^n}\sigma_{c^*2^n}(\Xi_1,\Xi_2;f)-\phi\mathcal{E}(f)}_{\mathbb{X}_1}\\
\lesssim& \norm{\phi\mathcal{E}(f)-\phi\sigma_{c^*2^n}(\Xi_1,\Xi_2;f)}_{\mathbb{X}_1}+\norm{R_{c^*\alpha 2^n}-\phi}_{\mathbb{X}_1}\norm{\sigma_{c^*2^n}(\Xi_1,\Xi_2;f)}_{\mathbb{X}_1}\\
\lesssim&\norm{\mathcal{E}(f)-\sigma_{c^*2^n}(\Xi_1,\Xi_2;f)}_B+2^{n(Q-q_2-\gamma)}\norm{f}_{\mathbb{X}_2}.
\end{aligned}
\end{equation}
In view of Equations~\eqref{eq:ef-sigmagamma}~and~\eqref{eq:phiRnpolybound}, we can conclude that
\begin{equation}\begin{aligned}
&E_{c^*A^*\alpha 2^{n+1}}(\Xi_1,\phi\mathcal{E}(f))\\
    \lesssim&\norm{\phi \mathcal{E}(f)-\sigma_{c^*A^*\alpha 2^{n+1}}(\Xi_1,\phi \mathcal{E}(f))}_{\mathbb{X}_1}\\
    \leq&\norm{\phi \mathcal{E}(f)-R_{c^*\alpha 2^n}\sigma_{c^*2^n}(\Xi_1,\Xi_2;f)}_{\mathbb{X}_1}\\
    &+\norm{\sigma_{c^*A^*\alpha 2^{n+1}}\big(\Xi_1;R_{c^*\alpha 2^n}\sigma_{c^*2^n}(\Xi_1,\Xi_2;f)-\phi\mathcal{E}(f)\big)}_{\mathbb{X}_1}\\
    \lesssim& \norm{\mathcal{E}(f)-\sigma_{c^*2^n}(\Xi_1,\Xi_2;f)}_{B}+\norm{f}_{\mathbb{X}_2}2^{n(Q-q_2-\gamma)}\\
\lesssim&(1+\norm{f}_{\mathbb{X}_2}(1+r^{q_2-S})) 2^{n(Q-q_2-\gamma)}.
\end{aligned}\end{equation}
Thus, $\phi\mathcal{E}(f)\in W_{\gamma-Q+q_2}(\Xi_1)$, completing the proof of part (b).
\end{proof}

\section*{Appendix}

\subsection{Tauberian theorem}\label{bhag:tauberian}
For the convenience of the reader, we reproduce the Tauberian theorem from \cite[Theorem~4.3]{tauberian}.

We recall that if $\mu$ is an extended complex valued Borel measure on $\RR$, then its total variation measure is defined for a Borel set $B$ by
$$
|\mu|(B)=\sup\sum |\mu(B_k)|,
$$
where the sum is over a partition $\{B_k\}$ of $B$ comprising Borel sets, and the supremum is over all such partitions.

A measure $\mu$ on $\RR$ is called an even measure if $\mu((-u,u))=2\mu([0,u))$ for all $u>0$, and $\mu(\{0\})=0$. If $\mu$ is an extended complex valued measure on $[0,\infty)$, and $\mu(\{0\})=0$, we define a measure $\mu_e$ on $\RR$ by 
$$
\mu_e(B)=\mu\left(\{|x| : x\in B\}\right),
$$
and observe that $\mu_e$ is an even measure such that $\mu_e(B)=\mu(B)$ for $B\subset [0,\infty)$. In the sequel, we will assume that all measures on $[0,\infty)$ which do not associate a nonzero mass with the point $0$ are extended in this way, and will abuse the notation $\mu$ also to denote the measure $\mu_e$. In the sequel, the phrase ``measure on $\RR$'' will refer to an extended complex valued Borel measure having bounded total variation on compact intervals in $\RR$, and similarly for measures on $[0,\infty)$.

Our main Tauberian theorem is the following.

\begin{theorem}\label{theo:maintaubertheo}
Let $\mu$ be an extended complex valued measure on $[0,\infty)$, and $\mu(\{0\})=0$. We assume that there exist $Q, r>0$, such that each of the following conditions are satisfied.
\begin{enumerate}
\item 
\be\label{eq:muchristbd}
\tn\mu\tn_Q:=\sup_{u\in [0,\infty)}\frac{|\mu|([0,u))}{(u+2)^Q} <\infty,
\ee
\item There are constants $c, C >0$,  such that
\be\label{eq:muheatgaussbd}
\left|\int_\RR \exp(-u^2t)d\mu(u)\right|\le c_1t^{-C}\exp(-r^2/t)\tn\mu\tn_Q, \qquad 0<t\le 1.
\ee 
\end{enumerate}
Let $H:[0,\infty)\to\RR$, $S>Q+1$ be an integer, and suppose that there exists a measure $H^{[S]}$ such that
\be\label{eq:Hbvcondnew}
H(u)=\int_0^\infty (v^2-u^2)_+^{S}dH^{[S]}(v), \qquad u\in\RR,
\ee
and
\be\label{eq:Hbvintbdnew}
V_{Q,S}(H)=\max\left(\int_0^\infty (v+2)^Qv^{2S}d|H^{[S]}|(v), \int_0^\infty (v+2)^Qv^Sd|H^{[S]}|(v)\right)<\infty.
\ee
Then for $n\ge 1$,
\be\label{eq:genlockernest}
\left|\int_0^\infty H(u/n)d\mu(u)\right| \le c\frac{n^Q}{\max(1, (nr)^S)}V_{Q,S}(H)\tn\mu\tn_Q.
\ee
\end{theorem}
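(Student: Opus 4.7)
The plan is to reduce the target integral to heat-kernel integrals of $\mu$ using the representation \eqref{eq:Hbvcondnew}, and then balance two complementary estimates: a trivial polynomial bound coming from $\tn\mu\tn_Q$, and a refined bound obtained from the Gaussian hypothesis \eqref{eq:muheatgaussbd}. First I would substitute \eqref{eq:Hbvcondnew} into $\int H(u/n)\,d\mu(u)$ and apply Fubini (justified by the absolute integrability implied by \eqref{eq:Hbvintbdnew} together with the finiteness of $|\mu|$ on compact sets) to obtain
\begin{equation*}
\int_0^\infty H(u/n)\,d\mu(u) \;=\; \int_0^\infty I(v,n)\,dH^{[S]}(v),\qquad I(v,n)=\int_0^\infty (v^2-u^2/n^2)_+^S\,d\mu(u),
\end{equation*}
so the task reduces to estimating $I(v,n)$ in a way that, after integration against $d|H^{[S]}|(v)$, produces precisely the two weights appearing in the definition of $V_{Q,S}(H)$.

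For the trivial bound, the integrand is supported in $u\in[0,nv]$ and dominated by $v^{2S}$, so \eqref{eq:muchristbd} gives
\begin{equation*}
|I(v,n)|\;\le\;v^{2S}|\mu|([0,nv])\;\le\;v^{2S}(nv+2)^Q\tn\mu\tn_Q\;\le\;n^Q v^{2S}(v+2)^Q\tn\mu\tn_Q
\end{equation*}
for $n\ge 1$; integration against $d|H^{[S]}|$ and the first term in $V_{Q,S}(H)$ yields the claimed estimate in the regime $nr\le 1$, where $n^Q/\max(1,(nr)^S)=n^Q$. In the complementary regime $nr\ge 1$ one needs to extract the extra $(nr)^{-S}$ decay, and this is where the Gaussian hypothesis enters. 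The idea is to rewrite the truncated power as a superposition of heat-type kernels: using a Laplace-type identity expressing $(v^2-y)_+^S$ as an absolutely convergent integral of $e^{-yt}$ against a kernel with moments controlled by $v^S$ and $v^{2S}$, substituting $y=u^2/n^2$, rescaling $t\mapsto t/n^2$ so that the rescaled parameter falls in $(0,1]$, and applying \eqref{eq:muheatgaussbd} slicewise, one obtains a bound of the form $|I(v,n)|\ls v^S(v+2)^Q\,n^{Q-S}r^{-S}\tn\mu\tn_Q$; the factor $r^{-S}$ emerges after integrating $e^{-r^2n^2/t}$ against the polynomial weight in $1/t$ produced by the representation, and integration against $d|H^{[S]}|$ together with the second term in $V_{Q,S}(H)$ closes this regime. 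Taking the minimum of the two bounds yields \eqref{eq:genlockernest}.

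The main obstacle is the second step: finding the right Laplace-type representation of $(v^2-y)_+^S$ in terms of $e^{-yt}$ with an absolutely integrable weight, and arranging the bookkeeping so that (i) after rescaling by $n^2$, the heat parameter stays inside $(0,1]$ where \eqref{eq:muheatgaussbd} is valid, (ii) the two polynomial weights in $v$ that fall out of the trivial and the Gaussian estimates match exactly the $v^{2S}$ and $v^S$ factors in the two entries of $V_{Q,S}(H)$, and (iii) the combination of the powers of $n$, $r$, and $t$ produced by integrating $e^{-r^2n^2/t}$ against the Laplace kernel gives the precise factor $n^{Q-S}r^{-S}$ and not a worse one. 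Once this balance is achieved, the remainder of the argument is routine application of Fubini and the definition of $\tn\mu\tn_Q$.
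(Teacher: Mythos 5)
Your Fubini reduction and your first regime are sound: absolute integrability follows from \eqref{eq:muchristbd} together with \eqref{eq:Hbvintbdnew} (note that local finiteness of $|\mu|$ alone would not justify Fubini, since $v$ ranges over all of $(0,\infty)$; you need the global growth bound), and the estimate $|I(v,n)|\le v^{2S}|\mu|([0,nv])\le n^Qv^{2S}(v+2)^Q\tn\mu\tn_Q$ correctly disposes of the case $nr\le 1$. The gap is in the second regime, and it is fatal as stated: the ``Laplace-type identity expressing $(v^2-y)_+^S$ as an absolutely convergent integral of $e^{-yt}$'' over real $t>0$ against an absolutely integrable kernel cannot exist. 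Indeed, if $(v^2-y)_+^S=\int_0^\infty e^{-yt}\,dW(t)$ with the integral absolutely convergent for $y>0$, the right-hand side is analytic in $y$ on the half-plane $\Re y>0$, while the left-hand side vanishes identically for $y\ge v^2$; the identity theorem would then force it to vanish for all $y>0$, a contradiction. This is not bookkeeping but the crux: $(v^2-y)_+^S$ is only $C^{S-1}$ at $y=v^2$ and compactly supported, so no superposition of real heat kernels reproduces it, and consequently your plan of ``applying \eqref{eq:muheatgaussbd} slicewise'' after rescaling $t\mapsto t/n^2$ has nothing to be applied to.

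The known repair, and essentially what the source \cite[Theorem~4.3]{tauberian} of this statement does (the present paper only reproduces the statement; cf.\ also \cite{sikora2004riesz, frankbern}), requires leaving the real axis. One can write $(x)_+^S=\frac{S!}{2\pi i}\int_{\Re z=\sigma}e^{xz}z^{-S-1}\,dz$, absolutely convergent since $S\ge 2$, so that $I(v,n)$ is expressed through $F(z)=\int_\RR e^{-u^2z}\,d\mu(u)$ at \emph{complex} times; but \eqref{eq:muheatgaussbd} controls $F$ only for real $t\in(0,1]$, so one must first propagate the Gaussian bound into a complex sector by a Phragm\'en--Lindel\"of (three-lines) argument, interpolating against the crude half-plane bound $|F(z)|\ls (\Re z)^{-Q/2}\tn\mu\tn_Q$ that follows from \eqref{eq:muchristbd} by partial integration. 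Equivalently, one can pass to the cosine (wave) transform: the Gaussian bound forces the distribution $s\mapsto \int\cos(us)\,d\mu(u)$ to vanish for $|s|<r$, and the cosine transform of $u\mapsto (v^2-u^2)_+^S$ decays like $|s|^{-S-1}$, which is precisely where the factor $(nr)^{-S}$ and the moment $v^S$ in the second entry of $V_{Q,S}(H)$ originate. Without one of these two mechanisms your second regime does not get off the ground; the proposal is architecturally aligned with the actual argument (Fubini reduction, two complementary regimes, matching the two moments in $V_{Q,S}(H)$), but it is missing the central idea, which you yourself flagged as the unresolved obstacle.
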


Proposition~\ref{prop:kernloc} is proved using this theorem with 
$$
\mu(u)=\mu_{x,y}(u)=\sum_{k : \lambda_k<u}\phi_k(x)\phi_k(y).
$$
Proposition~\ref{prop:jointkernloc} is proved using this theorem with 
$$
\mu(u)=\mu_{x_1,x_2}(u)=\sum_{j,k : \ell_{j,k}<u}A_{j,k}\phi_{1,j}(x_1)\phi_{2,k}(x_2).
$$

\subsection{Jacobi polynomials}\label{bhag:jacobi}

For $\alpha, \beta>-1$, $x\in (-1,1)$ and integer $\ell\ge 0$, the Jacobi polynomials $p_\ell^{(\alpha,\beta)}$ are defined by the Rodrigues' formula \cite[Formulas~(4.3.1), (4.3.4)]{szego}
\be\label{eq:rodrigues}
\begin{aligned}
&(1-x)^\alpha(1+x)^\beta p_\ell^{(\alpha,\beta)}(x)\\
=&\left\{\frac{2\ell+\alpha+\beta+1}{2^{\alpha+\beta+1}}\frac{\ell!(\ell+\alpha+\beta)!}{(\ell+\alpha)!(\ell+\beta)!}\right\}^{1/2}\frac{(-1)^\ell}{2^\ell \ell!}\frac{d^\ell}{dx^\ell}\left((1-x)^{\ell+\alpha}(1+x)^{\ell+\beta}\right),
\end{aligned}
\ee
where $z!$ denotes $\Gamma(z+1)$. The Jacobi polynomials satisfy the following well-known differential equation:
\begin{equation}\label{eq:jacobidifeq}
	{p''_n}^{(\alpha,\beta)}(x)(1-x^2)+(\beta-\alpha-(\alpha+\beta+2)x){p'_n}^{(\alpha,\beta)}(x)=-n(n+\alpha+\beta+1)p_n^{(\alpha,\beta)}(x).
\end{equation}
Each $p_\ell^{(\alpha,\beta)}$ is a polynomial of degree $\ell$ with positive leading coefficient, satisfying the orthogonality relation
\be\label{eq:jacobiortho}
\int_{-1}^1 p_\ell^{(\alpha,\beta)}(x)p_j^{(\alpha,\beta)}(x)(1-x)^\alpha(1+x)^\beta=\delta_{\ell,j},
\ee
and
\be\label{eq:pkat1}
p_\ell^{(\alpha,\beta)}(1)=\left\{\frac{2\ell+\alpha+\beta+1}{2^{\alpha+\beta+1}}\frac{\ell!(\ell+\alpha+\beta)!}{(\ell+\alpha)!(\ell+\beta)!}\right\}^{1/2}\frac{(\ell+\alpha)!}{\alpha!\ell!} \sim \ell^{\alpha+1/2}.
\ee
It follows that $p_\ell^{(\alpha,\beta)}(-x)=(-1)^\ell p_\ell^{(\beta,\alpha)}(x)$. 
In particular,  $p_{2\ell}^{(\alpha,\alpha)}$ is an even polynomial, and $p_{2\ell+1}^{(\alpha,\alpha)}$ is an odd polynomial.  
We note (cf. \cite[Theorem~4.1]{szego}) that
\begin{equation}\label{eq:evenjacobi}
\begin{aligned}
p_{2\ell}^{(\alpha,\alpha)}(x)=&2^{\alpha/2+1/4}p_\ell^{(\alpha,-1/2)}(2x^2-1)=2^{\alpha/2+1/4}(-1)^\ell p_\ell^{(-1/2,\alpha)}(1-2x^2)\\
p_{2\ell+1}^{(\alpha,\alpha)}(x)=&2^{\alpha/2+1/2}xp_\ell^{(\alpha,1/2)}(2x^2-1)=2^{\alpha/2+1/2}(-1)^\ell xp_\ell^{(1/2,\alpha)}(1-2x^2).
\end{aligned}\ee
It is known \cite{nowak2011sharp} that for $\alpha,\beta\ge -1/2$ and $\theta, \phi\in [0,\pi]$,
\be\begin{aligned}\label{eq:jacobigauss}
&\sum_{j=0}^\infty \exp(-j(j+\alpha+\beta+1)t)p_j^{(\alpha,\beta)}(\cos\theta)p_j^{(\alpha,\beta)}(\cos\phi)\\
\ls& (t+\theta\phi)^{-\alpha-1/2}(t+(\pi-\theta)(\pi-\phi))^{-\beta-1/2}t^{-1/2}\exp\left(-c\frac{(\theta-\phi)^2}{t}\right).
\end{aligned}\ee
We note that when $\beta=-1/2$, this yields
\be\begin{aligned}\label{eq:specialjacobigauss}
&\sum_{j=0}^\infty \exp(-j(j+\alpha+1/2)t)p_j^{(\alpha,-1/2)}(\cos\theta)p_j^{(\alpha,-1/2)}(\cos\phi)\\
\ls& t^{-\alpha-1}\exp\left(-c\frac{(\theta-\phi)^2}{t}\right).
\end{aligned}\ee
If $\ell\ge 0$ is an integer, and $\{Y_{\ell,k}\}$ is an orthonormal basis for the space $\HH_\ell^q$ of restrictions to the sphere $\SS^q$ (with respect to the probability volume measure) of $(q+1)$-variate homogeneous harmonic polynomials of total degree $\ell$, then
one has the
well-known addition formula \cite{mullerbk} and \cite[Chapter XI, Theorem 4]{batemanvol2} connecting $Y_{\ell,k}$'s with Jacobi polynomials defined in \eref{eq:rodrigues}:
\begin{equation}
\label{eq:addformula}
 \sum_{k=1}^{d\,^q_\ell} Y_{\ell,k}(\x)\overline{Y_{\ell,k}(\y)} =
\frac{\omega_q}{\omega_{q-1}} p_\ell^{(q/2-1,q/2-1)}(1)p_\ell^{(q/2-1,q/2-1)}(\x\cdot\y), \quad
\ell=0,1,\cdots,
\end{equation}
where $\omega_q=\operatorname{Vol}(\mathbb{S}^q)$ and $d\,^q_\ell=\operatorname{dim}(\mathbb{H}_\ell^q)$.

\end{document}